\documentclass[a4paper,12pt]{article}

\usepackage{a4wide}

\hyphenation{analysis onemax Doerr parameter leadingones Hoeffding develop-ment Krejca Carola}

\usepackage{float} 
\usepackage[utf8]{inputenc}
\usepackage[unicode=true,hypertexnames=false]{hyperref}
\usepackage{amsmath,amssymb,amsthm,amsfonts}
\usepackage{dsfont} 
\usepackage{mathtools} 
\usepackage[linesnumbered,ruled]{algorithm2e}
\let\oldnl\nl
\newcommand{\nonl}{\renewcommand{\nl}{\let\nl\oldnl}}
\usepackage{xspace}
\usepackage[dvipsnames,table]{xcolor}
\usepackage{tikz}
\usetikzlibrary{decorations.pathreplacing,calc,tikzmark}
\usepackage{pgfplots}
\pgfplotsset{compat=newest}
\pgfplotscreateplotcyclelist{tikzcycle}{%
thick,blue,mark=square*\\ 
thick,red,mark=*\\ 
thick,green!70!black,mark=triangle*\\ 
thick,orange,mark=diamond*\\ 
thick,blue,dashed,mark=square\\ 
thick,red,dashed,mark=o\\ 
thick,green!70!black,dashed,mark=triangle\\ 
thick,orange,dashed,mark=diamond\\ 
}

\usepackage{multirow}
\usepackage{booktabs}
\usepackage{pdflscape} 

\clubpenalty=10000
\widowpenalty=10000
\allowdisplaybreaks

\newcommand{\oea}{\mbox{$(1 + 1)$~EA}\xspace}

\newcommand{\onemax}{\textsc{OneMax}\xspace}
\newcommand{\leadingones}{\textsc{Leading\-Ones}\xspace}

\newcommand{\lo}{\textsc{LO}\xspace}

\newcommand{\N}{{\mathbb N}}
\newcommand{\R}{{\mathbb R}}

\DeclareMathOperator{\mut}{\texttt{mutate}}

\DeclareMathOperator{\Geom}{Geom}

\newtheorem{theorem}{Theorem}
\newtheorem{lemma}[theorem]{Lemma}

\let\originalleft\left
\let\originalright\right
\renewcommand{\left}{\mathopen{}\mathclose\bgroup\originalleft}
\renewcommand{\right}{\aftergroup\egroup\originalright}

\begin{document}

\title{Evolutionary Algorithms Are Significantly More Robust to Noise When They Ignore It}

\author{Denis Antipov \\
		Sorbonne Universit\'e,\\
		CNRS, LIP6 \\
        Paris, France \\
		\and
		Benjamin Doerr \\
		Laboratoire d'Informatique (LIX), \\
		CNRS, \'Ecole Polytechnique, \\
		Institut Polytechnique de Paris \\
		Palaiseau, France \\
}

\maketitle
{\sloppy

\begin{abstract}
	Randomized search heuristics (RSHs) are known to have a certain robustness to noise. Mathematical analyses trying to quantify rigorously how robust RSHs are to a noisy access to the objective function typically assume that each solution is re-evaluated whenever it is compared to others. This aims at preventing that a single noisy evaluation has a lasting negative effect, but is computationally expensive and requires the user to foresee that noise is present (as in a noise-free setting, one would never re-evaluate solutions).
	
    In this work, we conduct the first mathematical runtime analysis of an evolutionary algorithm solving a single-objective noisy problem without re-evaluations. We prove that the $(1+1)$ evolutionary algorithm without re-evaluations can optimize the classic LeadingOnes benchmark with up to constant noise rates, in sharp contrast to the version with re-evaluations, where only noise with rates $O(n^{-2} \log n)$ can be tolerated. 
    This result suggests that re-evaluations are much less needed than what was previously thought, and that they actually can be highly detrimental. The insights from our mathematical proofs indicate that this similar results are plausible for other classic benchmarks.
\end{abstract}

\section{Introduction}

In many real-world optimization problems, one does not have a perfect access to the problem instance, but, e.g., the objective function is mildly disturbed by \emph{noise}. Such noise can impose considerable difficulties to classic problem-specific algorithms. Randomized search heuristics, in contrast, are known to be able to cope with certain amounts of stochastic disturbances~\cite{BianchiDGG09,JinB05}. 

The ability to cope with noise has rigorously been studied and quantified via mathematical runtime analyses~\cite{NeumannW10,AugerD11,Jansen13,ZhouYQ19,DoerrN20}, that is, proven performance guarantees for certain algorithms in specific situations. For example, these results have shown that the $(1+1)$ evolutionary algorithm (EA) can solve the classic \onemax benchmark defined on bit-strings of length~$n$ in polynomial time when noise appears with a rate $O(n^{-1} \log n)$, but the runtime becomes super-polynomial for larger noise rates~\cite{Droste04,GiessenK16,Dang-NhuDDIN18}. For the equally popular benchmark \leadingones, only noise rate of order $O(n^{-2} \log n)$ admits a polynomial runtime~\cite{QianBYTY21,Sudholt21}. Evolutionary algorithms working with larger population sizes tend to be more robust to noise than the \oea, which is essentially a randomized hill-climber, see~\cite{GiessenK16,Sudholt21,AntipovDI24}.

These and almost all other mathematical runtime analyses of randomized search heuristics in the presence of noise assume that an already constructed solution is re-evaluated whenever it is compared to another solution. This seems to be justified by the fear that a noisy objective value, when not corrected via a renewed evaluation, could harm the optimization process for a long time. Interestingly, the only rigorous support for this fear are two analyses of an ant-colony optimizer in the presence of noise~\cite{DoerrHK12ants,SudholtT12}, one showing that this algorithm essentially cannot solve stochastic shortest paths problems (without re-evaluations) and the other proving a strong robustness to such disturbances when using re-evaluations. The only other runtime analysis not using re-evaluations is the recent work~\cite{DinotDHW23}. Since this work regards a multi-objective optimization problem and attributed the robustness to noise to the implicit diversity mechanisms of the multi-objective evolutionary algorithm regarded, it is hard to predict to what extent the findings generalize to the more classic case of single-objective optimization.

Re-evaluating each solution whenever its objective value is used by the RSH has two disadvantages. The obvious one is the increased computational cost. We note here that usually in black-box optimization, the function evaluations are the computationally most expensive part of the optimization process, up to the point that often the number of function evaluations is used as performance measure. A second problem with assuming re-evaluations in noisy optimization is that this requires the algorithm users to decide beforehand whether they expect to be prone to noise or not -- clearly, in a noise-free setting, one would not evaluate a solution more than once. 

\emph{Our contribution:} Given the apparent disadvantages of re-evaluations and the low theoretical support for these, in this work we study how a simple randomized search heuristic optimizes a classic benchmark problem when not assuming that solutions are re-evaluated. We analyze the setting best-understood in the case of re-evaluations, namely how the \oea optimizes the \leadingones benchmark, for which~\cite{Sudholt21} conducted a very precise runtime analysis, showing among others that a polynomial runtime can only be obtained for noise rates of at most $O(n^{-2} \log n)$. To our surprise, we obtain a much higher robustness to noise when not re-evaluating solutions. We prove that with noise rates up to a constant (which depends on the precise noise model, see Theorems~\ref{thm:all-one-bit} and~\ref{thm:all-bitwise} for the details), the \oea without re-evaluations optimizes the \leadingones benchmark in time quadratic in the problem size~$n$, which is the same asymptotic runtime as in the noise-free setting. This result suggests that the previous strong preference for re-evaluations is not as justified as the literature suggests. 

A closer inspection of our proofs also gives some insights in why working with possibly noisy objective values is less detrimental than previously thought, and sometimes even preferable. Very roughly speaking, we observe that noisy function values can also be overcome by generating a solution with true objective value at least as good as the previous noisy function value. Under reasonable assumptions (standard operators and standard noise models with not excessive noise rates), the mutation operator of the evolutionary algorithm has a higher variance than the noise, and consequently, it is easier to correct a noisy objective value by generating a sufficiently good solution than obtaining more noisy objective values due to new noise. From these insights, we are generally optimistic that our findings are not specific to the particular algorithm and benchmark studied in this work. 

Our theoretical study is supported by experimental results, which demonstrate that the \oea without re-evaluations has a performance similar to the noiseless setting, even when the noise rates are relatively high. In contrast, the re-evaluation approach makes the algorithm struggle at making a significant progress already at small problem sizes, when the noise is not too weak.

\section{Preliminaries}
\label{sec:prelims}

In this section we define the setting we consider, the notation and also mathematical tools we use in our analysis.
In this paper for any pair of integer numbers $a, b$ ($b \ge a$) by $[a..b]$ we denote an integer interval, that is, a set of all integer numbers which are at least $a$ and at most $b$. If $b < a$, then this denotes an empty set. By $\N$ we denote the set of all strictly positive integer numbers.

\subsection{\leadingones and Prior Noise}

\leadingones (\lo for brevity) is a benchmark function first proposed in~\cite{Rudolph97}, which is defined on bit strings of length $n$ (we call $n$ the \emph{problem size}) and which returns the size of the largest prefix of its argument consisting only of one-bits. More formally, for any bit string $x$ we have
\begin{align*}
	\leadingones(x) = \lo(x) = \sum_{i = 1}^n \prod_{j = 1}^i x_j.
\end{align*}

In this paper we consider optimization of \leadingones under prior noise. This means that each time we evaluate the \lo value of some bit string $x$, this bit string is first affected by some stochastic operator $N$. We call this operator \emph{noise}. Hence, instead of receiving the true value $\lo(x)$ we get $\lo(N(x))$. We consider the following two noise models.

\textbf{One-bit noise.} In this noise model with probability $q$ (which is called the \emph{noise rate}) operator $N(x)$ returns a bit string which is different from $x$ in exactly one bit, which is chosen uniformly at random. With probability $1 - q$ operator $N(x)$ returns an exact copy of $x$.

\textbf{Bitwise noise.} In this noise model with \emph{noise rate} $\frac{q}{n}$, operator $N(x)$ flips each bit in $x$ with probability $\frac{q}{n}$ independently from other bits, and returns the resulting bit string. We note that the definition of noise rate is different from the one-bit noise, however in both models the expected number of bits flipped by the noise is equal to $q$. Also, bitwise noise occurs with probability $1 - (1 - \frac{q}{n})^n = \Theta(\min(1, q))$, see, e.g., eq.~(1) in~\cite{Sudholt21}.

\subsection{The \oea}
\label{sec:oea}

\begin{algorithm}[t!]%
	\caption{The \oea maximizing a function $f:\{0,1\}^n \rightarrow \R$ under noise defined by operator $N$.}
	\label{alg:pseudo}
		\tcp{Initialization} 
		Sample $x \in \{0,1\}^{n}$ uniformly at random\;
		$f_x \gets f(N(x))$\;
		\tcp{Optimization}
		\While{not stopped}{
			$y \gets \mut(x)$\;
			$f_y \gets f(N(y))$\;
			\If{$f_y \ge f_x$}{
				$x \gets y$\;
				$f_x \gets f_y$\;
			} 
		} 
\end{algorithm}

We consider a simple elitist evolutionary algorithm called the \oea. This algorithm stores one individual $x$ (we call it the \emph{parent} individual), which is initialized with a random bit string. Then until some stopping criterion is met\footnote{Similar to many other theoretical studies, we do not define the stopping criterion, but we assume that it does not stop before it finds an optimal solution.} it performs iterations, and in each iteration it creates offspring $y$ by applying a mutation operator to $x$. If the value of the optimized function on $y$ is not worse than its value on $x$, then $y$ replaces $x$ as the parent for the next iteration. Otherwise $x$ stays as the parent individual. The optimized function is called the \emph{fitness function} and its value on any individual $x$ is called the \emph{fitness} of $x$. In the rest of the paper we assume that function $f$ optimized by the \oea is \leadingones.

We consider two mutation operators, which in some sense similar to the two noise models. \textbf{One-bit mutation} flips exactly one bit chosen uniformly at random. \textbf{Standard bit mutation} flips each bit independently from other bits with probability $\frac{\chi}{n}$, where $\chi$ is a parameter of the mutation. We call $\frac{\chi}{n}$ the \emph{mutation rate}. 

Previous theoretical analyses of the \oea in noisy environments assumed that the fitness of the parent is re-evaluated in each iteration when it is compared with its offspring. In particular, Sudholt showed in~\cite{Sudholt21} that the \oea optimizes \leadingones in $\Theta(n^2) \cdot e^{\Theta(\min(n, pn^2))}$, where $p < \frac{1}{2}$ is the probability that prior noise occurs. Since for one-bit noise we have $p = q$ and for bitwise noise we have $p = \Theta(\min(1, q))$, this implies that for both noise models with $q = \omega(\frac{\log(n)}{n^2})$, the runtime of the \oea is super-polynomial.

Since that result by Sudholt indicated that the \oea with re-evaluations is not robust to even very small noise rates, we are interested in the behavior of the algorithm when it always uses the first evaluated value for the parent $x$ until this parent is not replaced with a new individual. This approach, in some sense, is similar to using the \oea on a noisy function without being aware that it is noisy (or just ignoring this fact). The pseudocode of the \oea using this approach is shown in Algorithm~\ref{alg:pseudo}.

We enumerate iterations of the \oea starting from zero, and for all $t \in \N \cup \{0\}$ we use the following notation to describe iteration $t$. By $x_t$ we denote the parent $x$ at the beginning of iteration $t$. Slightly abusing the notation, we write $\tilde f(x_t)$ to denote the fitness value $f_x$ stored in the algorithm at the beginning of iteration $t$. By $y_t$ and $\tilde f(y_t)$ we denote the offspring $y$ created in iteration $t$ and its noisy fitness $f_y$ correspondingly.

\subsection{Auxiliary Tools}

In this section we collect mathematical tools which help us in our analysis. We start with the following drift theorem, which is often used in runtime analysis of RSH to estimate the first hitting time of stochastic processes.

\begin{theorem}[Additive Drift Theorem~\cite{HeY04}, upper bound]
\label{thm:additive-drift}
    Let $(X_t)_{t \ge 0}$ be a sequence of non-negative random variables with a finite state space $S \subseteq \R_0^+$ such that $0 \in S$. Let $T \coloneqq \inf\{t \ge 0 \mid X_t = 0\}$. If there exists $\delta > 0$ such that for all $s \in S \setminus \{0\}$ and for all $t \ge 0$ we have $E[X_t - X_{t + 1} \mid X_t = s] \ge \delta$, then $E[T] \le \frac{E[X_0]}{\delta}$.
\end{theorem}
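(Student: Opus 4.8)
The plan is to make rigorous the intuition behind additive drift: as long as the process has not yet reached~$0$, each step decreases the expected potential by at least~$\delta$, so after about $E[X_0]/\delta$ steps the process must already have been absorbed at~$0$. Since the drift hypothesis is only available for states $s \neq 0$, the one genuinely delicate point is to ensure that the argument never appeals to the (uncontrolled) behaviour of the process after it has hit~$0$. I would handle this by passing to the \emph{stopped} process $\tilde X_t \coloneqq X_{\min(t, T)}$, which is frozen from time~$T$ on, and by working with the natural filtration $(\mathcal F_t)_{t \ge 0}$, with respect to which the event $\{T > t\} = \{X_0 \neq 0, \dots, X_t \neq 0\}$ is measurable.

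First I would establish the one-step estimate $E[\tilde X_t] - E[\tilde X_{t+1}] \ge \delta \, \Pr[T > t]$ for every $t \ge 0$. Writing $g(t) \coloneqq E[\tilde X_t]$, this reads
\[
  g(t) - g(t+1) = E\!\left[(X_t - X_{t+1})\,\I\{T > t\}\right] = E\!\left[\I\{T > t\}\, E[X_t - X_{t+1} \mid \mathcal F_t]\right] \ge \delta \, \Pr[T > t].
\]
The first equality holds because on $\{T \le t\}$ the stopped process is already constant, so only $\{T > t\}$ contributes; the second applies the tower rule and takes the $\mathcal F_t$-measurable factor $\I\{T > t\}$ out of the inner conditional expectation; and the inequality applies the drift hypothesis, which is legitimate precisely because $X_t \neq 0$ on $\{T > t\}$.

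Finally I would telescope. As the increments $g(t) - g(t+1) \ge \delta\,\Pr[T > t] \ge 0$ are non-negative, $g$ is non-increasing and bounded below by~$0$, hence convergent, and summing gives
\[
  \delta \sum_{t = 0}^{\infty} \Pr[T > t] \le \sum_{t = 0}^{\infty}\bigl(g(t) - g(t+1)\bigr) = g(0) - \lim_{t \to \infty} g(t) \le g(0) = E[X_0].
\]
Using the standard identity $E[T] = \sum_{t \ge 0} \Pr[T > t]$ then yields $E[T] \le E[X_0]/\delta$, which in particular shows $E[T] < \infty$ and hence $T < \infty$ almost surely. The finiteness of the state space~$S$ keeps every expectation above bounded and removes all integrability concerns, so that, as anticipated, the only real subtlety is the boundary handling at~$0$, which the passage to the stopped process resolves cleanly; I expect verifying the one-step estimate (in particular invoking the drift bound only on $\{T>t\}$, and reading the hypothesis, stated for conditioning on $X_t=s$, as the per-step conditional drift $E[X_t - X_{t+1}\mid \mathcal F_t]\ge\delta$ on $\{X_t\neq 0\}$, which is the relevant form for the Markovian processes the theorem is applied to) to be the crux of the argument.
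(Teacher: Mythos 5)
The paper does not actually prove this statement: it is quoted verbatim as a tool from the cited reference and used as a black box, so there is no in-paper proof to compare yours against. Your argument is the standard self-contained proof of the additive drift upper bound -- pass to the stopped process $\tilde X_t = X_{\min(t,T)}$, prove the one-step estimate $E[\tilde X_t]-E[\tilde X_{t+1}]\ge \delta\,\Pr[T>t]$, telescope, and conclude via $E[T]=\sum_{t\ge 0}\Pr[T>t]$ -- and it is correct. You also correctly isolate the one genuine subtlety: the hypothesis conditions only on $\{X_t=s\}$, while your one-step estimate needs the drift bound on $\{T>t\}$, an event that is measurable with respect to the history but not with respect to $\sigma(X_t)$ alone (a path could in principle hit $0$ and leave again, so $\{T>t\}\subsetneq\{X_t\neq 0\}$). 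Your resolution -- reading the hypothesis in the filtration form $E[X_t-X_{t+1}\mid \mathcal F_t]\ge\delta$ on $\{X_t\neq 0\}$ -- is the standard repair and is exactly the form in which the theorem is applied in this paper, where $X_t=\Phi(s_t)$ is a potential on the states of a Markov chain, so conditioning on $X_t$ is as good as conditioning on the whole history. With that reading every step checks out, and as you note the finite state space removes all integrability concerns.
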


We also use the following inequality, which is a simplified version of Wald's equation shown in~\cite{DoerrK15}.

\begin{lemma}[Lemma~7 in~\cite{DoerrK15}]
    \label{lem:wald}
    Let $T$ be a random variable with bounded expectation and let $X_1, X_2, \dots$ be non-negative random variables with $E[X_i \mid T \ge i] \le C$ for some $C$ and for all $i \in \N$. Then
    \begin{align*}
        E\left[\sum_{i = 1}^T X_i\right] \le E[T] \cdot C.
    \end{align*}
\end{lemma}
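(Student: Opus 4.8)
The plan is to turn the random-length sum into a fixed, countably infinite sum by inserting indicator variables, after which the statement reduces to countable linearity of expectation for non-negative summands together with the tail-sum formula for $E[T]$. Concretely, I would start from the pointwise identity
\[
  \sum_{i = 1}^{T} X_i = \sum_{i = 1}^{\infty} X_i \, \I[T \ge i],
\]
which holds because $\I[T \ge i] = 1$ for exactly the indices $i \in [1..T]$ and $\I[T \ge i] = 0$ for $i > T$ (here $T$ is the non-negative integer-valued variable bounding the sum). Taking expectations, the task becomes to bound $E\!\left[\sum_{i=1}^{\infty} X_i \, \I[T \ge i]\right]$.

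I would then proceed in three steps. First, since every summand $X_i \, \I[T \ge i]$ is non-negative, the monotone convergence theorem (equivalently, Tonelli's theorem) lets me exchange expectation and summation without any further integrability assumption,
\[
  E\!\left[\sum_{i=1}^{\infty} X_i \, \I[T \ge i]\right] = \sum_{i=1}^{\infty} E[X_i \, \I[T \ge i]].
\]
Second, I would bound each term: for every $i$ with $P[T \ge i] > 0$ we have $E[X_i \, \I[T \ge i]] = E[X_i \mid T \ge i] \cdot P[T \ge i] \le C \cdot P[T \ge i]$ by the hypothesis $E[X_i \mid T \ge i] \le C$, and for $i$ with $P[T \ge i] = 0$ the term is simply $0$, so the bound $E[X_i \, \I[T \ge i]] \le C \cdot P[T \ge i]$ holds for all $i \in \N$. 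Third, summing these bounds and invoking the standard identity $\sum_{i=1}^{\infty} P[T \ge i] = E[T]$ for a non-negative integer-valued random variable yields
\[
  E\!\left[\sum_{i=1}^{T} X_i\right] \le \sum_{i=1}^{\infty} C \cdot P[T \ge i] = C \cdot E[T],
\]
which is the claim.

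The only genuinely delicate point — and the one where the classical Wald equation usually needs independence or an integrability hypothesis — is the interchange of the infinite sum and the expectation. Here it is disarmed entirely by the non-negativity of the $X_i$: Tonelli's theorem makes the swap valid unconditionally, even if both sides equal $+\infty$. The remaining care is bookkeeping: in the degenerate case $P[T \ge i] = 0$ the conditional expectation $E[X_i \mid T \ge i]$ is undefined, but the unconditional expectation $E[X_i \, \I[T \ge i]]$ still vanishes, so the per-term bound survives; and the assumption that $T$ has bounded expectation is used only to guarantee that the final right-hand side $C \cdot E[T]$ is finite.
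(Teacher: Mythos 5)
Your proof is correct. The paper does not prove this lemma itself---it imports it verbatim as Lemma~7 of \cite{DoerrK15}---and your argument (rewriting the random-length sum via indicator variables, exchanging sum and expectation by Tonelli/monotone convergence thanks to non-negativity, bounding each term by $C \cdot \Pr[T \ge i]$, and applying the tail-sum identity $\sum_{i \ge 1} \Pr[T \ge i] = E[T]$) is precisely the standard derivation of this Wald-type inequality, with the degenerate case $\Pr[T \ge i] = 0$ handled correctly.
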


For any individual $x$ evaluated by the \oea we call the \emph{active prefix} of $x$ the set of its first $\tilde f(x)$ bits, that is, the bits which ``pretended'' to be ones when we evaluate the fitness of $x$. The following lemma is an important ingredient of all our proofs.

\begin{lemma}
	\label{lem:active-prefix}
	For any individual $x$ and any $i \in [1..\tilde f(x)]$ the probability that there are exactly $i$ zero-bits in the active prefix of $x$ is at most the probability that the noise flipped $i$ particular bits when the fitness of $x$ was evaluated. In particular,
	\begin{enumerate}
		\item[(1)] for the one-bit noise with rate $q$ this probability is at most $\frac{q}{n}$ for $i = 1$ and zero for all other $i$,
		\item[(2)] for the bitwise noise with rate $\frac{q}{n}$ this probability is at most $(\frac{q}{n})^i$. 
	\end{enumerate}
\end{lemma}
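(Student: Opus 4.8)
The plan is to reduce the event in question to a purely combinatorial statement about which bits the noise flips, and then to read off the two probability bounds from the definitions of the noise models. First I would fix the bit string $x$ and regard the single evaluation in which its stored fitness $\tilde f(x) = \lo(N(x))$ is produced; here the only randomness is the fresh noise $N$, which is independent of $x$ itself. Since the algorithm's selection decisions depend only on the noisy fitness values and never on the hidden true contents of the prefix, it suffices to argue at the level of this one evaluation. Let $z_1 < z_2 < \cdots < z_m$ denote the positions of the zero-bits of $x$, and write $L \coloneqq \tilde f(x)$.

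The key observation I would establish is that a zero-bit of $x$ lies in the active prefix $[1..L]$ exactly when the noise flipped it: by definition $N(x)$ has $L$ leading ones, so every position $j \in [1..L]$ satisfies $N(x)_j = 1$; for a position $j \le L$ with $x_j = 0$ this forces the noise to have flipped $j$. Consequently, the event that the active prefix contains exactly $i$ zero-bits of $x$ implies $z_i \le L$, which in turn implies that the noise flipped all of the $i$ particular bits $z_1, \dots, z_i$. This gives the inclusion of events
\begin{align*}
\{\text{active prefix of } x \text{ has exactly } i \text{ zeros}\} \subseteq \{\text{noise flips } z_1, \dots, z_i\},
\end{align*}
and hence the claimed comparison of probabilities by monotonicity.

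With this inclusion in hand, the two concrete bounds follow directly. For bitwise noise the flips of the $i$ distinct bits $z_1,\dots,z_i$ are independent, each of probability $\frac{q}{n}$, so the right-hand side equals $(\frac{q}{n})^i$. For one-bit noise at most one bit is ever flipped, so the event is impossible for $i \ge 2$ (probability $0$), while for $i = 1$ it is the event of flipping the single bit $z_1$, which has probability $q \cdot \frac{1}{n} = \frac{q}{n}$.

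The argument is short, so the main thing to get right---and the only place where care is needed---is the interaction between the random prefix length $L$ and the noise: $L$ is itself a function of the noise, so I must not treat the prefix as fixed or condition on it. The inclusion above sidesteps this by using only the one-directional, deterministic implication ``$z_i$ lies in the prefix $\Rightarrow$ $z_1,\dots,z_i$ are all flipped,'' which holds for every realization of the noise regardless of how large $L$ happens to be; this is precisely what lets me pass to the unconditional probability of flipping $i$ specific bits without having to sum over the possible values of $L$.
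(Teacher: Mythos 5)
Your proof is correct and follows essentially the same route as the paper's: both identify that every zero-bit of $x$ lying in the active prefix must have been flipped by the noise, so the event is contained in ``noise flips the $i$ particular bits $z_1,\dots,z_i$,'' from which the two bounds follow from the definitions of the noise models. Your extra care in pinning down that these are the first $i$ zero-bits of $x$ and in handling the randomness of the prefix length is a welcome sharpening of the same argument.
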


\begin{proof}
	By the definition of the active prefix, when we evaluated the fitness of $x$, the noise affected it in such way that all its bits in the active prefix became one-bits. Hence if there are $i$ zero-bits in the active prefix of $x$, all of them have been flipped by the noise before the evaluation. Thus, flipping those $i$ bits is a super-event of the event when we have exactly $i$ zero-bits in the active prefix of $x$.

	For the one-bit noise the probability to flip a particular bit is $\frac{q}{n}$, and it cannot flip more than one bit. For the bitwise noise the probability that it flips $i$ particular bits is $(\frac{q}{n})^i$.
\end{proof}

The next two results are just short mathematical tools, which we formulate as separate lemmas to simplify the arguments in our main proofs.

\begin{lemma}
	\label{lem:ith-power-diff}
	For any real values $a$ and $b$ such that $a > b > 0$ and for any positive integer $i$ we have
	\begin{align*}
		\frac{a^{i + 1} - b^{i + 1}}{a^i - b^i} \le a + b.
	\end{align*}
\end{lemma}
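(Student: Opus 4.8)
The plan is to clear the positive denominator and reduce the claim to an obviously true inequality. First I would observe that since $a > b > 0$, we have $a^i > b^i$ and $a^{i+1} > b^{i+1}$, so both the numerator $a^{i+1} - b^{i+1}$ and the denominator $a^i - b^i$ are strictly positive. Hence the asserted bound is equivalent to the cross-multiplied form
\begin{align*}
    a^{i+1} - b^{i+1} \le (a + b)(a^i - b^i).
\end{align*}

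Next I would simply expand the right-hand side as $(a+b)(a^i - b^i) = a^{i+1} - a b^i + a^i b - b^{i+1}$ and subtract the left-hand side from it. The terms $a^{i+1}$ and $-b^{i+1}$ cancel, and what remains is the requirement
\begin{align*}
    0 \le a^i b - a b^i = ab\,(a^{i-1} - b^{i-1}).
\end{align*}

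Finally I would note that $ab > 0$ because both $a$ and $b$ are positive, and $a^{i-1} \ge b^{i-1}$ because $a \ge b > 0$ and $i - 1 \ge 0$ (here $i \ge 1$ guarantees the exponent is a nonnegative integer, so this monotonicity of the power function applies, with the degenerate case $i = 1$ giving $a^0 - b^0 = 0$). Therefore the product is nonnegative and the inequality follows, with equality exactly when $i = 1$ (or $a = b$, which is excluded).

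I do not expect any genuine obstacle here, as the statement is a routine algebraic manipulation. The only points needing care are to justify cross-multiplication by explicitly recording that the denominator is positive, and to handle the boundary exponent $i = 1$, where $a^{i-1} - b^{i-1}$ vanishes and the inequality becomes the familiar identity $\frac{a^2 - b^2}{a - b} = a + b$ rather than a strict inequality.
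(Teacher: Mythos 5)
Your proof is correct and follows essentially the same route as the paper: both reduce the claim to the observation that $(a+b)(a^i-b^i)$ exceeds $a^{i+1}-b^{i+1}$ by the nonnegative term $a^ib-ab^i=ab(a^{i-1}-b^{i-1})$, and then divide by the positive denominator. You are slightly more explicit than the paper in justifying why that residual term is nonnegative, including the boundary case $i=1$, which is a minor improvement rather than a different argument.
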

\begin{proof}
	We have
	\begin{align*}
		(a + b)\left(a^i - b^i\right) &= a^{i + 1} - b^{i + 1} + a^i b - a b^i \ge a^{i + 1} - b^{i + 1}.
	\end{align*}
	Dividing both sides by the non-negative term $(a^i - b^i)$, we obtain the lemma statement.
\end{proof}

\begin{lemma}
    \label{lem:sum-integral}
    For all $a > 1$ and all integer $n \ge 2$ we have
    \begin{align*}
        \sum_{j = 2}^n \frac{1}{a^j - 1} \le \frac{\ln\left(1 - \frac{1}{a}\right)}{\ln(a)}.
    \end{align*}
\end{lemma}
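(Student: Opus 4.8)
The plan is to bound the sum by a matching improper integral, exploiting that the summand is a decreasing function of its index. I would set $f(x) = \frac{1}{a^x - 1}$ for $x \ge 1$; since $a > 1$, this function is positive and strictly decreasing, so on each unit interval $[j-1,j]$ we have $f(j) \le \int_{j-1}^{j} f(x)\,dx$. Summing over $j \in [2..n]$, the integrals over the adjacent intervals $[1,2],[2,3],\dots,[n-1,n]$ combine into a single integral over $[1,n]$, yielding $\sum_{j=2}^n \frac{1}{a^j-1} \le \int_1^n f(x)\,dx \le \int_1^\infty f(x)\,dx$, where the last step uses $f > 0$ and the convergence of the improper integral (for large $x$ we have $f(x) \sim a^{-x}$, which is integrable).

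It then remains to evaluate $\int_1^\infty \frac{dx}{a^x-1}$ in closed form. I would substitute $u = a^x$, so that $du = u\ln(a)\,dx$ and the integrand becomes $\frac{1}{\ln(a)}\cdot\frac{1}{u(u-1)}$; the partial-fraction identity $\frac{1}{u(u-1)} = \frac{1}{u-1}-\frac1u$ then produces the antiderivative $F(x) = \frac{1}{\ln(a)}\ln\!\left(\frac{a^x-1}{a^x}\right) = \frac{1}{\ln(a)}\ln\!\left(1 - a^{-x}\right)$, which can be checked directly by differentiation. Evaluating between the limits, $F(x) \to 0$ as $x \to \infty$ because $a^{-x}\to 0$, while $F(1) = \frac{1}{\ln(a)}\ln\!\left(1 - \frac1a\right)$, so that $\int_1^\infty f(x)\,dx = -\frac{\ln\left(1 - \frac1a\right)}{\ln(a)}$. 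Combining this with the integral comparison gives $\sum_{j=2}^n \frac{1}{a^j-1} \le -\frac{\ln\left(1 - \frac1a\right)}{\ln(a)}$.

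I would flag one point of care about the right-hand side, which is precisely where the sign bookkeeping of the stated statement matters. Since $a > 1$ gives $0 < 1 - \frac1a < 1$, we have $\ln\!\left(1 - \frac1a\right) < 0$ while $\ln(a) > 0$, so the quantity dominating the (manifestly positive) sum is the positive number $-\frac{\ln\left(1 - \frac1a\right)}{\ln(a)} = \frac{\ln\!\left(\frac{a}{a-1}\right)}{\ln(a)}$. The bound as printed in the statement must therefore be read with this sign: the bare expression $\frac{\ln\left(1 - \frac1a\right)}{\ln(a)}$ is negative and cannot dominate a positive left-hand side, so it is the positive quantity above that the proof establishes. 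The main obstacle is thus not any deep inequality but the calculus bookkeeping: getting the antiderivative right via the substitution and partial fractions, and correctly orienting the sign when evaluating the improper integral; the decreasing-function comparison and the convergence at infinity are routine once $f$ is identified.
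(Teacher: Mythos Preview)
Your proof is correct and follows essentially the same route as the paper: bound the sum by the integral $\int_1^n \frac{dx}{a^x-1}$ using monotonicity, compute the antiderivative $\frac{1}{\ln a}\ln(1-a^{-x})$, and discard the (negative) contribution from the upper limit, which is equivalent to your extension to $\int_1^\infty$. Your observation about the missing minus sign in the stated right-hand side is also correct; the paper's own proof concludes with $-\frac{\ln(1-\frac{1}{a})}{\ln a}$, so the statement as printed is a typo.
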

\begin{proof}
    	Since $a > 1$, the function $g(x) = \frac{1}{a^x - 1}$ is monotonically decreasing in $x$ in interval $(0, +\infty)$. Thus we can bound the sum above by a corresponding integral. We then obtain
	\begin{align*}
		\sum_{j = 2}^n \frac{1}{a^j - 1} &\le \int_1^n \frac{dx}{a^x - 1} = \int_1^n \frac{a^{-x}dx}{1 - a^{-x}} = \frac{1}{\ln a} \int_1^n \frac{d\left( 1 - a^{-x}\right)}{1 - a^{-x}} \\
        &= \frac{1}{\ln a} \ln\left(1 - a^{-x}\right) \bigg|_1^n = \frac{\ln\left(1 - a^{-n}\right)}{\ln a} - \frac{\ln\left(1 - a^{-1}\right)}{\ln a} \le - \frac{\ln\left(1 - \frac{1}{a}\right)}{\ln a}, 
	\end{align*}
 where the last step is justified by $\frac{\ln(1 - a^{-n})}{\ln a} < 0$.
\end{proof}

We also use the following inequality which follows from Inequality~3.6.2 in~\cite{vasic2012analytic}.

\begin{lemma}
\label{lem:e-estimate}
    For any $n > 0$ and any $x \in [0, n]$ we have 
    \begin{align*}
        e^x - \frac{x^2}{2n} \le \left(1 - \frac{x}{n}\right)^n \le e^x.
    \end{align*}
\end{lemma}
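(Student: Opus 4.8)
The plan is to treat the two inequalities separately, reading the claimed chain with the natural negative exponents, i.e. $e^{-x} - \tfrac{x^2}{2n} \le (1-\tfrac{x}{n})^n \le e^{-x}$ for $x \in [0,n]$. (With $e^{x}$ the stated chain cannot hold: at $x = n$ the middle term is $0$ while $e^{x} \ge 1$, so the intended exponents are forced.) The upper bound is the routine part and the lower bound carries all the content.

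For the upper bound I would simply invoke the elementary inequality $1 + t \le e^{t}$ with $t = -\tfrac{x}{n}$, giving $1 - \tfrac{x}{n} \le e^{-x/n}$. Since $x \le n$, the base $1 - \tfrac{x}{n}$ is non-negative, so raising both sides to the $n$-th power is monotone and yields $(1-\tfrac{x}{n})^n \le e^{-x}$.

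For the lower bound, the clean self-contained route is to control the gap $D(x) \coloneqq e^{-x} - (1-\tfrac{x}{n})^n$ and to show $D(x) \le \tfrac{x^2}{2n}$. Since $D(0) = 0$, it suffices to prove $D'(x) \le \tfrac{x}{n}$ on $[0,n]$ and integrate. One computes $D'(x) = (1-\tfrac{x}{n})^{n-1} - e^{-x}$; applying the upper-bound inequality once more (now to the $(n-1)$-st power) gives $(1-\tfrac{x}{n})^{n-1} \le e^{-(n-1)x/n} = e^{-x} e^{x/n}$, so $D'(x) \le e^{-x}(e^{x/n} - 1)$. The decisive step is to exploit the decaying factor via $e^{-x} \le e^{-x/n}$ (valid precisely because $x \ge \tfrac{x}{n}$ on the range), which collapses the bound to $e^{-x/n}(e^{x/n}-1) = 1 - e^{-x/n} \le \tfrac{x}{n}$. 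Integrating $D'(s) \le \tfrac{s}{n}$ from $0$ to $x$ then yields $D(x) \le \tfrac{x^2}{2n}$, which is exactly the lower bound.

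The delicate point — and presumably the reason the statement defers to Inequality~3.6.2 of~\cite{vasic2012analytic} as a shortcut — is obtaining the sharp constant $\tfrac{1}{2n}$ rather than a cruder $\tfrac{x^2}{n}$. A series approach writes $(1-\tfrac{x}{n})^n = e^{-x} e^{-S}$ with $S = \tfrac{x^2}{2n} + \tfrac{x^3}{3n^2} + \cdots$ and then uses $e^{-S} \ge 1 - S$, but this only gives $e^{-x} - e^{-x}S$, whose error term can exceed $\tfrac{x^2}{2n}$ unless one trades the decay of $e^{-x}$ against the growth of the tail $S$. The derivative argument above packages exactly this trade-off cleanly, and I would finish by double-checking the boundary regimes $x = n$ (middle term $0$) and $n = 1$, both of which reduce to the standard estimate $1 - e^{-x/n} \le \tfrac{x}{n}$.
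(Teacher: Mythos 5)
Your proof is correct and, unlike the paper, self-contained: the paper gives no argument for this lemma at all and simply defers to Inequality~3.6.2 of the cited monograph of Vasi\'c and Mitrinovi\'c. You are also right that the displayed statement has a sign typo --- the exponents must be negative, i.e.\ $e^{-x} - \frac{x^2}{2n} \le (1-\frac{x}{n})^n \le e^{-x}$, which is exactly the form in which the lemma is invoked later (e.g.\ in the proof of Lemma~\ref{lem:mut-and-noise}, where $(1 - \frac{\chi+q}{n} + \frac{q\chi}{n^2})^{n-i}$ is bounded from below by $e^{-(\chi+q-q\chi/n)}$ minus a quadratic error term). Your derivative argument is clean and checks out: $D(0)=0$ and $D'(x) = (1-\frac{x}{n})^{n-1} - e^{-x} \le e^{-x}(e^{x/n}-1) \le e^{-x/n}(e^{x/n}-1) = 1 - e^{-x/n} \le \frac{x}{n}$, so integrating gives $D(x) \le \frac{x^2}{2n}$; what this buys over the paper's citation is a transparent, elementary verification of the sharp constant $\frac{1}{2n}$. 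The one caveat is that the step $(1-\frac{x}{n})^{n-1} \le e^{-(n-1)x/n}$ requires the exponent $n-1$ to be non-negative (for $0<n<1$ the direction reverses), and indeed the lemma itself is false for such $n$: with $n=\frac12$ and $x=n$ the claimed lower bound is $e^{-1/2}-\frac14 > 0$ while the middle term is $0$. So the hypothesis ``any $n>0$'' in the statement is itself too generous and should read $n \ge 1$; since $n$ is the problem size (a positive integer) everywhere the lemma is used, this does not affect the paper, and your proof is valid in exactly the regime that matters.
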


\section{Runtime Analysis}

\subsection{One-bit Noise and One-bit Mutation}
\label{sec:all-one-bit}

We start our analysis with the most simple case that we have one-bit noise with rate $q \in (0, 1)$ and the \oea uses one-bit mutation. We aim to estimate the expected number of iterations it takes the algorithm to find the optimum of \leadingones when it does not re-evaluate the fitness of the parent solution. The proof idea of this situation will later be used for all other combinations of mutation and noise. The main result of this section is the following theorem.

\begin{theorem}
	\label{thm:all-one-bit}
	Consider a run of the \oea with one-bit mutation optimizing \leadingones under one-bit noise which occurs with probability $q < 1$. Then the EA finds the optimum (the all-ones bit string) and evaluates it properly in expected number of at most $\frac{(1 + q)n^2}{(1 - q)^2} + \frac{3q}{2(1 - q)}$ iterations.
\end{theorem}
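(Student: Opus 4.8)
The plan is to track the progress of the algorithm through the \emph{stored} fitness value $\tilde f(x_t)$ rather than through the true number of leading ones of the parent. The crucial structural fact is that, since an offspring is accepted only when $\tilde f(y_t) \ge \tilde f(x_t)$ and then its value is copied into $f_x$, the sequence $\tilde f(x_t)$ is non-decreasing and bounded by $n$. Hence it suffices to bound the expected number of iterations needed to raise the stored value to $n$, and then to add the (small) expected time needed to turn a parent that already has stored value $n$ into the correctly evaluated all-ones string.

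First I would set up a level argument. Call an iteration \emph{successful} if $\tilde f(x_{t+1}) > \tilde f(x_t)$. Since each success increases the stored value by at least one and the value lies in $[0..n]$, the number $S$ of successful iterations until the stored value reaches $n$ satisfies $S \le n$. The total time is then the sum of the waiting times between successive successes, so by Wald's equation (Lemma~\ref{lem:wald}) it is enough to bound the expected waiting time for one success by some $C$, giving the bound $E[S]\cdot C$; alternatively one applies the additive drift theorem (Theorem~\ref{thm:additive-drift}) to the potential $X_t = n - \tilde f(x_t)$ once a uniform lower bound $\delta$ on the expected per-step increase of the stored value is available, yielding $E[X_0]/\delta \le n/\delta$.

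The heart of the proof is the computation of this per-step (equivalently, per-level) rate, and here the main obstacle is that the active prefix of the parent need not consist of true one-bits. By Lemma~\ref{lem:active-prefix} the active prefix contains at most one zero-bit, and this happens with probability at most $\tfrac{q}{n}$; I call such a parent \emph{dirty}. For a \emph{clean} parent, whose true leading-ones count equals its stored value $v$, a success is produced by flipping the leftmost zero (probability $\tfrac1n$) and having the noise not spoil the new prefix (probability at least $1-q$); this already gives an improvement rate of order $\tfrac{1-q}{n}$, and flipping the first zero can additionally reveal ``free riders,'' which I would exploit to sharpen the constant. The difficulty lies with the dirty parents: there the stored value can essentially only be increased after the single hidden zero is first turned into a genuine one. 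I would therefore show that a dirty parent is \emph{repaired} (returned to a clean parent of the same stored value) within expected $O(\tfrac{n}{1-q})$ iterations by flipping exactly that bit, and that a clean parent becomes dirty only with probability $O(\tfrac{vq}{n^2})$ per step. Combining these two estimates shows that the detours through dirty states inflate the expected time per level by only a factor $\tfrac{1}{1-q}$, which is the source of the second factor $(1-q)^{-1}$ and hence of the $(1-q)^{-2}$ in the claimed bound; carrying the $q$-dependent lower-order terms through the summation over the at most $n$ levels produces the factor $(1+q)$.

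Finally I would treat the very last step separately. When the stored value first reaches $n$, the parent is the all-ones string unless the noise concealed a single remaining zero, an event of probability $O(\tfrac qn)$ by Lemma~\ref{lem:active-prefix}; in that ``false optimum'' the algorithm must still flip the last zero and have the result evaluated without noise, costing $O(\tfrac{n}{1-q})$ further iterations. Weighting this cost by its probability yields the additive correction term $\tfrac{3q}{2(1-q)}$ and completes the bound. I expect the drift computation in the presence of dirty prefixes to be the genuinely delicate part, since a naive potential based on the \emph{true} leading-ones count has no positive drift for large $q$ (a clean parent can be turned into a much worse one whenever the noise repairs a bit that mutation just destroyed); it is exactly the use of the monotone stored value, together with the rarity and the fast repair of dirty states, that rescues the argument.
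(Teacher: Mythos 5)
Your overall strategy is essentially the paper's own — a renewal decomposition powered by Lemma~\ref{lem:active-prefix}, the repair/success rate $\frac{1-q}{n}$, and Wald's equation — merely reorganized around levels of the stored value instead of the paper's phases (excursions away from the state $f(x_t)=\tilde f(x_t)$) nested inside super-phases (waiting times for a true fitness increase). However, there are two concrete gaps. First, your clean/dirty dichotomy is not exhaustive: a parent can satisfy $f(x_t) > \tilde f(x_t)$ (the paper's state $S_>$), which is reached from a clean parent whenever mutation flips the first zero-bit, the offspring collects free riders, and the noise then flips one of the one-bits inside the new, longer prefix so that the accepted stored value undercounts the truth. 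Such a parent is neither clean (true count equals stored value) nor dirty (zero in the active prefix), so your repair argument does not apply to it as written; it needs its own treatment (the paper returns from it to a clean parent with probability at least $\frac{n-f(x_t)+1}{n}(1-q)\ge\frac{1-q}{n}$ by flipping any bit in positions $[f(x_t)..n]$ and having no noise). This second corruption channel also occurs with probability up to $\frac{q}{n}$ per step, and it is exactly what produces the factor $\frac{1+q}{1-q}$ per excursion rather than $\frac{1}{1-q}$; ignoring it, your accounting would yield a constant strictly better than the one claimed, which is a symptom of the missing case. Relatedly, your estimate $O(\frac{vq}{n^2})$ for a clean parent becoming dirty is too small: the dominant channel is not the noise undoing a destructive mutation but the noise flipping the first zero-bit of a non-improving offspring (mutation touches a position $\ge v+2$, noise flips position $v+1$), which alone contributes about $\frac{q}{n}$ and matches the bound of Lemma~\ref{lem:active-prefix}. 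This does not change the final order, but the mechanism you describe is not the one that matters.

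Second, the additive term $\frac{3q}{2(1-q)}$ does not come from the endgame. In the paper it is the expected length of phase $0$: the very first evaluation $f_x \gets f(N(x))$ of the random initial individual is itself noisy, so the run can begin with $f(x_0)<\tilde f(x_0)$ (probability $\frac{q}{n}$) or with $f(x_0)>\tilde f(x_0)$ (probability at most $\frac{q}{2n}$, computed from the distribution of the initial \lo value), each costing at most $\frac{n}{1-q}$ iterations to repair, giving $(\frac{q}{n}+\frac{q}{2n})\cdot\frac{n}{1-q}=\frac{3q}{2(1-q)}$. Your sketch never accounts for this initial noisy evaluation, and the coefficient $\frac{3}{2}$ cannot be recovered from your final ``false optimum'' correction (whose cost is of the same order but with a different constant). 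The endgame does require an argument — reaching stored value $n$ does not mean the all-ones string has been found and correctly evaluated — but you would then have to show that its cost together with the initialization cost still fits under the stated bound; as written, the bookkeeping does not close.
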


Before we prove Theorem~\ref{thm:all-one-bit}, we need some preparation steps. At the start of any iteration $t \in \N$ the algorithm can be in one of three states: with $f(x_t) = \tilde f(x_t)$ (we call this state $S_=$), with $f(x_t) > \tilde f(x_t)$ (state $S_>$), or with $f(x_t) < \tilde f(x_t)$ (state $S_<$). We divide a run of the \oea into phases, and each phase (except, probably, the first one) starts in state $S_=$ and ends in the next iteration after which the \oea starts also in $S_=$. 

More formally, phases are defined as follows. Let $s_t$ be the state of the algorithm in iteration $t$ for all $t = 0, 1, \dots$, and let $Q$ be the set of all iterations, in which the algorithm is in state $S_=$, that is, $Q = \{t \mid s_t = S_=\}$. For all $i \in \N$ let $\tau_i$ be the $i$-th element of $Q$ enumerating them in ascending order. Then for all $i \in \N$ \emph{phase $i$} is defined as the integer interval $[\tau_i..\tau_{i + 1} - 1]$. Phase $0$ is defined as $[0..\tau_1 - 1]$. For all $i$ by the \emph{length of phase $i$} we denote its cardinality, which is $\tau_{i + 1} - \tau_i$ for phases $i > 1$ and which is $\tau_1$ for $i = 0$.

The following lemma estimates the expected length of one phase.

\begin{lemma}
	\label{lem:phase-one-bit}
	Consider a run of the \oea with one-bit mutation on \leadingones under one-bit noise with rate $q < 1$. For all $i \ge 1$ the expected length of phase $i$ is $E[\tau_{i + 1} - \tau_i] \le \frac{1 + q}{1 - q}$. The expected length of phase $0$ is $E[\tau_1] \le \frac{3q}{2(1 - q)}$.
\end{lemma}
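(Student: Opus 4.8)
The plan is to combine two facts: a uniform lower bound on the one-step probability of (re)entering state $S_=$, which controls how long an excursion away from $S_=$ lasts, and an upper bound on the probability of leaving $S_=$ in a single iteration, which controls how often such excursions begin. First I would establish the \emph{return bound}: from every state other than $S_=$, the probability that the next iteration puts the algorithm into $S_=$ is at least $\frac{1-q}{n}$. The argument is uniform over $S_<$ and $S_>$: writing $v = f(x_t)$ for the true parent fitness, consider the iteration in which one-bit mutation flips bit $v+1$ (the first true zero) and the noise does not fire, an event of probability $\frac{1}{n}(1-q)$. This yields an offspring $y$ with $f(y) > v$ whose stored fitness equals its true fitness, so if $y$ is accepted the new state is $S_=$. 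Acceptance I get from Lemma~\ref{lem:active-prefix}: under one-bit noise the active prefix contains at most one zero, so in $S_<$ it consists of ones except for the single zero at position $v+1$, and flipping it makes $f(y) \ge \tilde f(x_t)$; in $S_>$ acceptance is immediate since $f(y) > v > \tilde f(x_t)$. As this bound holds for all non-$S_=$ states, the number of consecutive non-$S_=$ iterations is stochastically dominated by a geometric variable with success probability $\frac{1-q}{n}$, so any excursion away from $S_=$ lasts at most $\frac{n}{1-q}$ iterations in expectation, irrespective of the (possibly complicated) transitions among non-$S_=$ states.

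Next I would bound the \emph{leaving probability}. Starting in $S_=$, if the noise does not fire then $\tilde f(y) = f(y)$ and the resulting state is again $S_=$ (whether $y$ is accepted or rejected), so leaving requires noise. The new parent lands in $S_<$ exactly when its active prefix contains a zero, which by Lemma~\ref{lem:active-prefix} has probability at most $\frac{q}{n}$. Entering $S_>$ requires the mutation to flip bit $k+1$ (so that $f(y) > k$) and the noise to then lower the observed value back to at least $k$; a short count shows this too has probability at most $\frac{q}{n}$. Hence $P(\text{leave } S_=) \le \frac{2q}{n}$, and for a phase $i \ge 1$, whose length is $1$ plus a conditional excursion, $E[\tau_{i+1} - \tau_i] \le 1 + \frac{2q}{n}\cdot\frac{n}{1-q} = \frac{1+q}{1-q}$.

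For phase $0$ the initial parent $x_0$ is uniformly random and $\tau_1$ is the first hitting time of $S_=$ from the initial noisy state, so I would decompose by that state. The $S_<$ contribution is at most $\frac{q}{n}\cdot\frac{n}{1-q} = \frac{q}{1-q}$ by the same two bounds. The $S_>$ contribution is genuinely smaller, and here I would not use the crude $\frac{n}{1-q}$ return estimate but the sharper fact that from $S_>$ with stored value $m$ the return probability is at least $\frac{n-m}{n}(1-q)$; since $E[f(x_0)] = 1 - 2^{-n} \le 1$, the stored value is typically tiny, so the return is fast and this contribution stays below $\frac{q}{2(1-q)}$. Adding the two parts gives $E[\tau_1] \le \frac{3q}{2(1-q)}$.

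I expect the main obstacle to be the return bound, and specifically verifying that flipping the single ``hole'' in the active prefix always restores $S_=$: this is exactly where the one-zero structure guaranteed by Lemma~\ref{lem:active-prefix} for one-bit noise is essential, and it is what later forces a separate treatment of bitwise noise, where the active prefix can hold several zeros. A secondary difficulty is pinning down the constant in the phase-$0$ bound, which requires exploiting the asymmetry between the slow return from $S_<$ and the fast return from $S_>$ rather than a single uniform return estimate.
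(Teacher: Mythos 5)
Your treatment of phases $i \ge 1$ is essentially the paper's proof: the same three-state decomposition, the same uniform return probability of at least $\frac{1-q}{n}$ from $S_<$ and $S_>$ (hence geometric domination of the excursion length, irrespective of how the chain wanders among non-$S_=$ states), the same bound of $\frac{2q}{n}$ on leaving $S_=$, and the same arithmetic $1 + \frac{2q}{1-q} = \frac{1+q}{1-q}$. One small repair: your witness event for the return, ``mutation flips bit $v+1$, the first true zero,'' is undefined in the state $S_>$ when $x_t$ is already the all-ones string (reachable when the optimum has been found but noisily undervalued). The paper sidesteps this by allowing the mutation to flip \emph{any} bit in positions $[f(x_t)..n]$ with no noise, which still produces an accepted, correctly evaluated offspring; this is a trivial patch.

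The genuine gap is in phase $0$, in the $S_>$ contribution. You propose to replace the uniform return estimate by the state-dependent bound $\frac{n-m}{n}(1-q)$, where $m$ is the stored value, and to conclude from $E[\lo(x_0)] = 1-2^{-n}$ that ``the return is fast.'' But that per-step bound is only valid while the chain sits in an $S_>$ state with stored value $m$: during the excursion the process may accept an offspring and move to $S_<$ (or to an $S_>$ state with a larger stored value), after which the one-step return probability falls back to $\frac{1-q}{n}$ and the remaining excursion costs up to $\frac{n}{1-q}$ in expectation. So the geometric domination with the larger parameter does not hold for the whole excursion, and you never actually compute anything that lands on $\frac{q}{2(1-q)}$ — it is asserted. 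The paper takes the complementary route: it keeps the uniform return time $\frac{n}{1-q}$ and instead bounds the \emph{probability of starting} in $S_>$, namely $\Pr[s_0 = S_>] = \sum_k \Pr[\lo(x_0)=k]\cdot\frac{qk}{n} = \frac{q}{n}\,E[\lo(x_0)]$, which is the argument you should use. Be aware that the constant is delicate on either route: the $S_<$ part already contributes $\frac{q}{1-q}$, so the stated total $\frac{3q}{2(1-q)}$ needs $\Pr[s_0=S_>]\le\frac{q}{2n}$, whereas the direct computation gives $\frac{q}{n}(1-2^{-n})$ (the paper's extraction of an extra factor $\frac12$ rests on its evaluation of $\sum_{k\ge1} k2^{-(k-1)}$, which is worth re-checking). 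The bound $E[\tau_1]\le\frac{2q}{1-q}$ falls out immediately and is all that is needed downstream, since phase $0$ only affects a lower-order term in Theorem~\ref{thm:all-one-bit}.
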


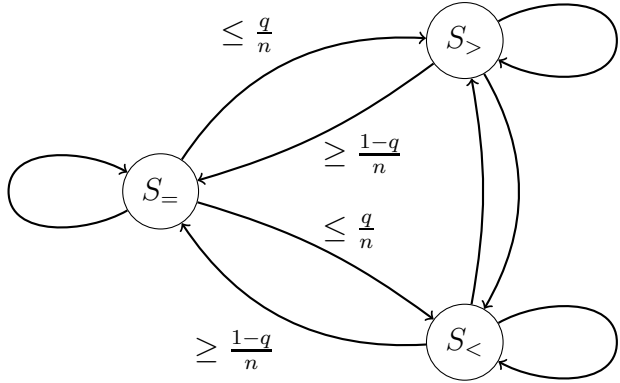
\begin{figure}
	\begin{center}
		\begin{tikzpicture}
			\node (seq) [draw, circle] at (0, 0) {$S_=$};
			\node (sg) [draw, circle] at (4, 2) {$S_>$};
			\node (sl) [draw, circle] at (4, -2) {$S_<$};

			\draw [->, thick] (seq) to[out=210, in=270] (-2, 0) to[out=90, in=150] (seq);
			\draw [->, thick] (sl) to[out=30, in=90] (6, -2) to[out=270, in=330] (sl);
			\draw [->, thick] (sg) to[out=30, in=90] (6, 2) to[out=270, in=330] (sg);

			\draw [->, thick] (seq) to[bend left] node [above left, midway] {$\le \frac{q}{n}$} (sg);
			\draw [->, thick] (sg) to[bend left=10pt]node [below right=-4pt, midway] {$\ge \frac{1 - q}{n}$} (seq);
			\draw [->, thick] (seq) to[bend left=10pt] node [above right=-4pt, midway] {$\le \frac{q}{n}$} (sl);
			\draw [->, thick] (sl) to[bend left]node [below left, midway] {$\ge \frac{1 - q}{n}$} (seq);

			\draw [->, thick] (sg) to[bend left=30pt] (sl);
			\draw [->, thick] (sl) to[bend right=10pt] (sg);
			
		\end{tikzpicture}
	\end{center}
	\caption{The Markov Chain used in the proof of Lemma~\ref{lem:phase-one-bit} and the possible transitions between the states.}
	\label{fig:one-bit-illustation}
\end{figure}

\begin{proof}
	We describe the algorithm states as a Markov chain shown in Figure~\ref{fig:one-bit-illustation} and find the transition probabilities between the states.

	When the algorithm starts some iteration $t$ in state $S_=$, then to go to state $S_<$ in one iteration, the noise must increase the fitness of the offspring $y_t$ (and also this noisy fitness must be not worse than $\tilde f(x_t)$). This means that $y_t$ must have at least one zero-bit in its active prefix. By Lemma~\ref{lem:active-prefix}, the probability of this event is at most $\frac{q}{n}$. To go to state $S_>$ from $S_=$, we need to have $f(y_t) > \tilde f(y_t)$ and also $\tilde f(y_t) \ge \tilde f(x_t) = f(x_t)$. Hence, the algorithm must create an offspring $y_t$ with strictly better \lo value than the one of the parent and then noise must occur and reduce its fitness. This is possible only if the mutation flips the first zero-bit, which in the case of one-bit mutation has probability of $\frac{1}{n}$, and then noise occurs with probability $q$. Consequently, the transition probabilities from state $S_=$ to states $S_<$ and $S_>$ are both at most $\frac{q}{n}$.

	When the algorithm starts an iteration $t$ in state $S_>$, then to move to state $S_=$ the algorithm can flip any of the bits in positions $[f(x_t)..n]$ (then the fitness of $y_t$ will be at most by one smaller than the true fitness of $x_t$, and therefore it is not worse than the noisy fitness $\tilde f(x_t)$) and then be lucky to have no noise. The probability of this is at least $\frac{n - f(x_t) + 1}{n} \cdot (1 - q) \ge \frac{1 - q}{n}$.
	
	When the algorithm starts iteration $t$ in state $S_<$, then $x_t$ has exactly one zero-bit in its active prefix, since in one-bit noise model the real offspring differs from the noisy one in at most one bit. To go to state $S_=$, the \oea can flip the only zero-bit in the active prefix of $x_t$ via mutation (thus, get $y_t$ with $f(y_t) = \tilde f(x_t)$) and then have no noise (thus, have $\tilde f(y_t) = f(y_t)$). The probability of this event is $\frac{1}{n} \cdot (1 - q) = \frac{1 - q}{n}$.
	
	With these transition probabilities, we can estimate the expected number of steps it takes to go to state $S_=$ starting from each state. Let $T_=$, $T_>$ and $T_<$ be the time (number of iterations) until the \oea reaches state $S_=$ starting from states $S_=$, $S_>$ and $S_<$ respectively. Since transition probabilities from $S_>$ and $S_<$ to $S_=$ are both at least $\frac{1 - q}{n}$, then both $T_>$ and $T_<$ are dominated by a geometric distribution $\Geom(\frac{1 - q}{n})$, and their expected values are at most $\frac{n}{1 - q}$. To bound $E[T_=]$, we estimate 
	\begin{align*}
		E[T_=] &=   1 + \Pr[S_= \to S_<] E[T_<] +  \Pr[S_= \to S_>] E[T_>] \\
			   &\le 1 + \left(\frac{q}{n} + \frac{q}{n}\right) \cdot \frac{n}{1 - q} = 1 + \frac{2q}{1 - q} = \frac{1 + q}{1 - q}.
	\end{align*}
	Noting that $\tau_{i + 1} - \tau_i = T_=$ for all $i \ge 1$ completes the proof for all phases, except phase 0. 

    In iteration $t = 0$ the algorithm is in state $S_<$ only if the noise flipped the first zero-bit in the initial individual. The probability of this event is $\frac{q}{n}$. The algorithm is in state $S_>$ in iteration $t = 0$, if noise flipped one of the one-bits in the prefix of the initial individual. For all $k \in [0..n]$ the probability that the \lo value of the initial individual is $k$ is $2^{-\min(k + 1, n)}$ (since it requires a particular value for the first $\min(k + 1, n)$ bits). If the \lo value is $k$, then the probability that noise flips one of the $k$ one-bits in the prefix is $\frac{qk}{n}$. By the law of total probability and by the well-known estimate $\sum_{k = 1}^{+\infty} kx^{k - 1} = \frac{x}{(1 - x)^2}$ which holds for all $x \in (0, 1)$, the probability of starting in state $S_>$ is then
    \begin{align*}
        \sum_{k = 0}^{n - 1} 2^{-(k + 1)} \cdot \frac{qk}{n} + q2^{-n} &= \frac{q}{n} \left(\sum_{k = 1}^{n - 1} k2^{-(k + 1)} + n2^{-n}\right) = \frac{q}{n} \left(\sum_{k = 1}^{n - 1} k2^{-(k + 1)} + n\sum_{k = n}^{+\infty} 2^{-(k + 1)}\right) \\
        &\le \frac{q}{4n} \sum_{k = 1}^{+\infty} k2^{-(k - 1)} = \frac{q}{4n} \cdot 2 = \frac{q}{2n}. 
    \end{align*}

    By the law of total expectation, we have
    \begin{align*}
        E[\tau_1] &= \Pr[s_0 = S_=] \cdot 0 + \Pr[s_0 = S_<] \cdot E[T_<] + \Pr[s_0 = S_>] \cdot E[T_>] \\  
        &\le \left(\frac{q}{n} + \frac{q}{2n}\right) \cdot \frac{n}{1 - q} = \frac{3q}{2(1 - q)}.
    \end{align*}
\end{proof}

With the estimate of the expected time of one phase, we can prove Theorem~\ref{thm:all-one-bit}.

\begin{proof}[Proof of Theorem~\ref{thm:all-one-bit}]
    We first define a \emph{super-phase} of the algorithm as follows. Let $R = \{\tau_{i + 1} \mid f(x_{\tau_{i + 1}}) > f(x_{\tau_i}), i \in \N\}$ that is, $R$ is a set of iterations which start a new phase such that the new phase starts with a strictly higher fitness than the previous phase (in terms of both true and noisy fitness, since they are equal in the beginning of any phase, except phase $0$). Note that $R$ has at most $n$ elements, since there are $n$ different fitness values. Let $t_0 = \tau_1$ and for all $i \in [1..|R|]$ let $t_i$ be the $i$-th element of $R$, if we sort them in ascending order. Then we define the $i$-th super-phase as interval $[t_i..t_{i + 1}]$ for all $i \in [0..|R| - 1]$.

    Consider some particular, but arbitrary super-phase $i$. It consists of one or more phases, and we denote the length of $j$-th phase in this super-phase by $T_j$. We call a phase \emph{successful}, if the next phase starts with a strictly better fitness than this phase. This implies that a super-phase ends after a successful phase occurs. A phase is successful, if it consists of one iteration in which mutation flips the first zero-bit of $x$ and noise does not occur. The probability of this event is $\frac{1 - q}{n}$. Therefore, the number of phases $N$ in each super-phase is dominated by a geometric distribution $\Geom(\frac{1 - q}{n})$. By Lemma~\ref{lem:wald} and by the estimate of the expected length of a phase from Lemma~\ref{lem:phase-one-bit}, we have that the expected length of any super-phase is

    \begin{align*}
        E[t_{i + 1} - t_i] = \sum_{j = 1}^N E[T_j] \le E[N] \cdot \frac{1 + q}{1 - q} \le \frac{(1 + q)n}{(1 - q)^2}.
    \end{align*}

    The total runtime consists of the length of phase $0$ and the sum of length of all super-phases. Recalling that the number of super-phases $|R|$ is at most $n$, by Lemma~\ref{lem:wald} we obtain that the total runtime is 
    \begin{align*}
        E[T] = E[\tau_1] + \sum_{k = 1}^{|R| - 1} E[t_{i + 1} - t_i] \le \frac{3q}{2(1 - q)} + n \cdot \frac{(1 + q)n}{(1 - q)^2} = \frac{(1 + q)n^2}{(1 - q)^2} + \frac{3q}{2(1 - q)}.
    \end{align*}
\end{proof}

\subsection{Bitwise Noise and Bitwise Mutation} 
\label{sec:all-bitwise}

In this section we study the case when the \oea uses standard mutation and noise is bitwise. This implies that noise has a non-zero probability to flip any number of bits, and thus there might be more than one zero-bit in the active prefix of the current individual $x$. However, as we show in this section, if noise is not too strong, then the \oea can handle situations with $k \ge 1$ zero-bits in the active prefix in time of order $O(\frac{1}{p})$, where $p$ is the probability that such situation occurs. This implies that the unfortunate events when the parent has too many zero-bits in its active prefix only add at most a constant factor to the runtime compared to the noiseless setting. The main result of this section is the following theorem.

\begin{theorem}
    \label{thm:all-bitwise}
	Consider a run of the \oea with standard bit mutation with rate $\frac{\chi}{n}$ optimizing \leadingones under bitwise noise with rate $\frac{q}{n}$. Let $r \coloneqq \chi + q - \frac{2\chi q}{n}$ and assume that (i) $\chi = \Theta(1)$ and $q = O(1)$ and (ii) there exists a constant $c \in (0, 1)$ such that $- \frac{\ln(1 - \frac{q}{r})}{\ln\frac{r}{q}} \le (1 - c)e^{-(\chi + q)}$.
    Then the expected number of iterations until the \oea finds the optimum (the all-ones bit string) and evaluates it properly is at most
    \begin{align*}
        \frac{n^2}{c} \left(\frac{e^{\chi + q}}{\chi} + q\left(\frac{e^{\chi + q}}{\chi}\right)^2 \right) + O(n).
    \end{align*}
\end{theorem}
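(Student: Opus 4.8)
The plan is to reuse the phase/super-phase machinery behind Lemma~\ref{lem:phase-one-bit} and the proof of Theorem~\ref{thm:all-one-bit}, generalizing the three-state picture to the fact that the active prefix of the parent may now contain any number $k\ge1$ of zero-bits. The structural fact I would exploit first, valid in both noise models, is that the stored fitness $\tilde f(x_t)$ is non-decreasing, since an offspring is accepted only when $\tilde f(y_t)\ge\tilde f(x_t)$. I would therefore keep the states $S_=$ and $S_>$ but split $S_<$ into states $S_<^{(k)}$, meaning that the active prefix of $x_t$ contains exactly $k\ge1$ zero-bits (its \emph{deficit}), define a phase as the stretch between consecutive $S_=$ iterations and a super-phase as a maximal block of phases whose $S_=$-level does not strictly increase. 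As before there are at most $n$ super-phases. A phase certainly succeeds (ending the super-phase) whenever a single iteration flips the first zero-bit, leaves the leading ones untouched and incurs no noise; by Lemma~\ref{lem:e-estimate} this has probability at least $\frac{\chi}{n}(1-\frac{\chi}{n})^{m}(1-\frac{q}{n})^{n}\ge(1-o(1))\frac{\chi}{n}e^{-(\chi+q)}$, so, writing $B:=e^{\chi+q}/\chi$, the number of phases per super-phase is dominated by a geometric variable with expectation $O(nB)$.

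The heart of the proof, and the analogue of Lemma~\ref{lem:phase-one-bit}, is to bound the expected phase length, i.e. the expected return time to $S_=$. By Lemma~\ref{lem:active-prefix}(2) a phase enters $S_<^{(k)}$ with probability at most $(q/n)^{k}$, so what is needed is a matching bound on the expected sojourn time $E[T_<^{(k)}]$. For $k=1$ the parent can return to $S_=$ by having mutation repair the single surplus zero with no noise, so $E[T_<^{(1)}]=O(nB)$ and this excursion contributes $\frac{q}{n}\cdot O(nB)=O(qB)$ to the phase length, which is exactly the source of the $qB^2$ term. The subtlety is the deeper excursions: insisting that mutation alone repair all $k$ zeros costs $(n/\chi)^{k}e^{\chi+q}$ and makes $\sum_k(q/\chi)^{k}e^{\chi+q}$ diverge once $q\ge\chi$. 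The remedy is that acceptance out of $S_<^{(k)}$ is governed not by mutation alone but by the combined operator $N(\mut(\cdot))$, under which each bit is net-flipped with probability $\frac{\chi}{n}(1-\frac{q}{n})+(1-\frac{\chi}{n})\frac{q}{n}=\frac{r}{n}$; this is precisely why $r=\chi+q-\frac{2\chi q}{n}$ enters the statement, and since $r>q$ the relevant geometric ratio $q/r$ always stays below $1$. Tracking the deficit as a process that drifts downward at the combined rate, I would bound the return time by a telescoping sum over $k$ in which consecutive power-difference terms are controlled through $\frac{r^{k+1}-q^{k+1}}{r^{k}-q^{k}}\le r+q$ (Lemma~\ref{lem:ith-power-diff}), and bound the resulting tail $\sum_{k\ge2}\frac{q^{k}}{r^{k}-q^{k}}=\sum_{k\ge2}\frac{1}{(r/q)^{k}-1}$ by Lemma~\ref{lem:sum-integral} with $a=r/q$.

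Condition~(ii) is exactly the inequality that tames this tail: it states that the above sum is at most $(1-c)e^{-(\chi+q)}$, i.e. that the cost of being dragged into a deep ($k\ge2$) excursion is at most a $(1-c)$-fraction of the clean progress rate $e^{-(\chi+q)}$. Consequently the deep excursions slow the process down by at most a factor $1/c$, and the $S_>$ excursions are handled exactly as in the one-bit case and absorbed into lower-order terms. Assembling the pieces with Wald's inequality (Lemma~\ref{lem:wald}) as in the proof of Theorem~\ref{thm:all-one-bit} --- summing phase lengths over the geometric number of phases, then over at most $n$ super-phases, and finally adding the $O(n)$ cost of phase~$0$ and of the concluding noise-free evaluation of the all-ones string --- gives $n\cdot O(nB)\cdot\frac{1}{c}\bigl(1+O(qB)\bigr)+O(n)=\frac{n^{2}}{c}\bigl(B+qB^{2}\bigr)+O(n)$, which is the claimed bound for $B=e^{\chi+q}/\chi$. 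Assumption~(i), $\chi=\Theta(1)$ and $q=O(1)$, keeps every $(1-\cdot/n)^{n}$ and $e^{-(\chi+q)}$ estimate of Lemma~\ref{lem:e-estimate} within constant factors throughout.

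The step I expect to be the real obstacle is the sojourn-time bound $E[T_<^{(k)}]$. Unlike the one-bit case the deficit can both grow and shrink inside an excursion, an acceptance generally moves the state among the $S_<^{(k)}$ levels rather than returning cleanly to $S_=$, and one must genuinely show that the deficit has a downward drift whose rate is dictated by $r$ (not by $\chi$) so that the series over $k$ converges for the entire range $q=O(1)$ rather than only for $q<\chi$. Making the constants in this drift-and-telescoping argument match the exact shape of condition~(ii), instead of a mere order-of-magnitude version, is where the work concentrates; once that estimate is in hand, the surrounding bookkeeping (the $S_>$ excursions, phase~$0$, and the final evaluation of the optimum) parallels the one-bit proof.
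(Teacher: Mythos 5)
Your proposal is correct and follows essentially the same route as the paper: the same state decomposition by the number of zero-bits in the active prefix, the same phase/super-phase bookkeeping with Wald's inequality, and the same use of the combined flip rate $r$, Lemma~\ref{lem:ith-power-diff}, Lemma~\ref{lem:sum-integral}, and condition~(ii) to control the deep excursions. The ``telescoping drift'' you flag as the main obstacle is exactly what the paper executes, by assigning the explicit potential $\Phi(S_j)=\frac{n^j}{r^j-q^j}$ (and $\Phi(S_>)=1+\frac{qe^{\chi+q}}{r-q}$) and applying the additive drift theorem to get a per-state drift of $ce^{-(\chi+q)}-O(1/n)$, hence the $1/c$ factor and the $qB^2$ term you predicted.
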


Before we start the proof, we discuss condition (ii) on $\chi$ and $q$ (and $r$, which is a function of $\chi$ and $q$). If $\chi = \Theta(1)$ and $q = o(1)$, then the left part $-\frac{\ln(1 - \frac{q}{r})}{\ln\frac{r}{q}}$ is $o(1)$, while $e^{-(\chi + q)} = \Omega(1)$. Hence, in this case we can choose $c$ close to one, namely $c = 1 - o(1)$. The upper bound on the runtime is then $e^{\chi}\chi^{-1} n^2 (1 + o(1))$. For larger $q = \Theta(1)$ this condition can be satisfied only for some range of $\chi$. If we express $q$ as a fraction of $\chi$, that is, $q = \alpha \chi$ for some $\alpha > 0$, then condition (ii) can be rewritten as 
\begin{align*}
    \frac{\ln(1 + \alpha)e^{(\alpha + 1)\chi}}{\ln(1 + \alpha)  - \ln \alpha} \le 1 - c - o(1).
\end{align*}
From this inequality it trivially follows that to have the left part less than one, we need $\ln \alpha$ to be negative, that is, we need the noise rate to be smaller than the mutation rate. This relation between those two rates ensures that if we get a parent $x$ with wrongly evaluated fitness, then variation of the mutation operator must be stronger than variation of noise, so that fixing this faulty situation was more likely than making it worse. More precise computation\footnote{The code used to perform these computations can be found in the supplementary material, the file name is ``calculate\_alpha.py''.} allows to see that $\chi \approx 1.4$ allows the left part to be less than one for the maximum possible $q \approx 0.39$. We note, however, that this is likely not a tight bound on the maximum noise rate which can be tolerated by the \oea, and in our experimental investigation in Sections~\ref{sec:experiments} we show that the algorithm can efficiently optimize \leadingones even when $q = 1$. 

To prove Theorem~\ref{thm:all-bitwise}, we first need several auxiliary results. The next lemma will help to estimate the probability of a ``good event'', when the algorithm reduces the number of zero-bits in the active prefix of $x$.

\begin{lemma}
	\label{lem:mut-and-noise}
	Assume that both $q$ and $\chi$ are $O(1)$ and let $r = \chi + q - \frac{2q\chi}{n}$ (as in Theorem~\ref{thm:all-bitwise}). Let $x$ be some arbitrary bit string. Let $\tilde y$ be an offspring of $x$ that was obtained by standard bit mutation with rate $\frac{\chi}{n}$ to $x$ and then bitwise noise with rate $\frac{q}{n}$. Let also $S$ be an arbitrary non-empty subset of $[1..n]$ and let $i$ be its size $|S|$. Consider the event that these three conditions are satisfied:
	\begin{enumerate}
		\item[(1)] each bit with position in $S$ was flipped by exactly one of mutation or noise; 
		\item[(2)] at least one bit with position in $S$ was flipped by mutation; and
		\item[(3)] all bits with positions not in $S$ have not been flipped.
	\end{enumerate}
	The probability of this event is at least
	\begin{align*}
		\left(\frac{r^i - q^i}{n^i}\right) \left(e^{-(\chi + q)} - O\left(\frac{1}{n}\right)\right).
	\end{align*}
\end{lemma}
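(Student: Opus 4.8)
The plan is to exploit the full independence of the bitwise mutation and bitwise noise operators. For each position $j$, let $M_j$ and $N_j$ be the indicators that mutation, respectively noise, flips bit $j$; these $2n$ indicators are mutually independent, with $\Pr[M_j = 1] = \frac{\chi}{n}$ and $\Pr[N_j = 1] = \frac{q}{n}$. First I would compute the per-bit quantities. A single bit is flipped by exactly one operator with probability $\frac{\chi}{n}(1 - \frac{q}{n}) + (1 - \frac{\chi}{n})\frac{q}{n} = \frac{\chi + q - 2\chi q/n}{n} = \frac{r}{n}$, and it is left unchanged (neither flips, or both flip) with probability $(1 - \frac{\chi}{n})(1 - \frac{q}{n}) + \frac{\chi}{n}\cdot\frac{q}{n} = 1 - \frac{r}{n}$. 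I would also record that, within the event that a bit is flipped by exactly one operator, the sub-event that this flip came from noise alone has probability $(1 - \frac{\chi}{n})\frac{q}{n}$.

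Because the bit positions are independent, the target probability factorizes into an $S$-part and a part for $[1..n] \setminus S$. For the complement, condition~(3) requires all $n - i$ bits outside $S$ to be unchanged, contributing $(1 - \frac{r}{n})^{n-i}$. For $S$, I would treat conditions~(1) and~(2) jointly by subtraction: the probability that all $i$ bits are each flipped by exactly one operator is $(\frac{r}{n})^i$, and from this I subtract the probability that all $i$ bits are flipped by noise alone, namely $\bigl((1 - \frac{\chi}{n})\frac{q}{n}\bigr)^i$, since inside the exactly-one event the complement of ``at least one bit flipped by mutation'' is precisely ``every bit flipped by noise alone''. Using $(1 - \frac{\chi}{n})^i \le 1$, the $S$-part is at least $\frac{r^i}{n^i} - \frac{q^i}{n^i} = \frac{r^i - q^i}{n^i}$.

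It then remains to show $(1 - \frac{r}{n})^{n-i} \ge e^{-(\chi+q)} - O(\frac{1}{n})$. Since $1 - \frac{r}{n} \le 1$, I would first discard the exponent offset via $(1 - \frac{r}{n})^{n-i} \ge (1 - \frac{r}{n})^n$, then apply Lemma~\ref{lem:e-estimate} (with $x = r$) to get $(1 - \frac{r}{n})^n \ge e^{-r} - \frac{r^2}{2n}$. Substituting $r = \chi + q - \frac{2\chi q}{n}$ gives $e^{-r} = e^{-(\chi+q)}e^{2\chi q/n} = e^{-(\chi+q)}(1 + O(\frac{1}{n}))$ because $\chi, q = O(1)$, and together with $\frac{r^2}{2n} = O(\frac{1}{n})$ this yields $e^{-(\chi+q)} - O(\frac{1}{n})$. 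Multiplying the $S$-part and the complement-part proves the lemma.

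The main difficulty is bookkeeping rather than a deep idea. One must ensure the $O(\frac{1}{n})$ error is uniform in $i$ and in the choice of $S$; this holds because the bound $(1 - \frac{r}{n})^{n-i} \ge (1 - \frac{r}{n})^n$ removes all dependence on $i$ before the estimates are applied. One must also treat condition~(2) by the event-subtraction above rather than a crude union bound, since only the former produces the clean factor $r^i - q^i$ needed to match the statement.
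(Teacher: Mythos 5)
Your proposal follows essentially the same route as the paper: factorize over positions by independence, compute the per-bit probability $\frac{r}{n}$ of being flipped by exactly one operator, obtain the $S$-part $\frac{r^i-q^i}{n^i}$ by subtracting the ``all flips came from noise'' sub-event, and bound the outside part via Lemma~\ref{lem:e-estimate} after dropping the exponent offset $i$. The one point where you diverge is your reading of condition~(3): you take ``not flipped'' to mean ``net value unchanged'', so a bit outside $S$ may be flipped by \emph{both} mutation and noise, giving the per-bit factor $1-\frac{r}{n}=1-\frac{\chi+q}{n}+\frac{2\chi q}{n^2}$. The paper (and, crucially, the downstream use in Lemma~\ref{lem:phase-bitwise}, where mutation must not disturb any bit outside $S$ because that would alter the \emph{true} fitness of the offspring) requires that neither operator touches those bits, with per-bit probability $\left(1-\frac{\chi}{n}\right)\left(1-\frac{q}{n}\right)=1-\frac{\chi+q}{n}+\frac{\chi q}{n^2}$. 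Your event is therefore a strict superset of the intended one, and a lower bound on its probability does not formally lower-bound the quantity the lemma needs. The repair is immediate: run your exponential estimate on $\left(1-\frac{\chi+q-\chi q/n}{n}\right)^{n-i}$ instead of $\left(1-\frac{r}{n}\right)^{n-i}$; since the two differ only in an $O(1/n^2)$ per-bit term, the same conclusion $e^{-(\chi+q)}-O(\frac{1}{n})$ follows and the rest of your argument, including the uniformity in $i$ and $S$, stands unchanged.
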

\begin{proof}
	Consider one particular, but arbitrary bit. The probability that it is flipped by either mutation or noise, but not by both of them is
	\begin{align*}
		\frac{\chi}{n}\left(1 - \frac{q}{n}\right) + \frac{q}{n}\left(1 - \frac{\chi}{n}\right) = \frac{\chi + q - \frac{2q\chi}{n}}{n} = \frac{r}{n}.
	\end{align*}
	The probability that it happens with all $i$ bits in positions in $S$ is then $(\frac{r}{n})^i$.
	None of those $i$ bits are flipped by mutation only in the case of the sub-event, when all of those bits were flipped by noise. The probability of this sub-event is $(\frac{q}{n})^i$.
	Hence, the first two conditions are satisfied with probability exactly $(\frac{r^i - q^i}{n^i})$.
	Each bit outside of $S$ is not flipped by mutation nor by noise with probability 
	\begin{align*}
		\left(1 - \frac{\chi}{n}\right) \left(1 - \frac{q}{n}\right) = 1 - \frac{\chi + q}{n} + \frac{q\chi}{n^2}.
	\end{align*}
	The probability that all bits with position not in $S$ are not flipped then is
	\begin{align*}
		\left(1 - \frac{\chi + q}{n} + \frac{q\chi}{n^2}\right)^{n - i} &\ge e^{-\left(\chi + q - \frac{q\chi}{n}\right)} - \frac{\left(\chi + q - \frac{q\chi}{n}\right)^2}{2n} \ge e^{-(\chi + q)} \left(1 + \frac{q\chi}{n}\right) - O\left(\frac{1}{n}\right) \\
		&= e^{-(\chi + q)} - O\left(\frac{1}{n}\right),
	\end{align*}
	where in the first step we used Lemma~\ref{lem:e-estimate}.
\end{proof}

Similar to the previous section, we distinguish different states in which the algorithm can occur. However, since bitwise noise can lead to any number of zero-bits in the active prefix, we need a more detailed description of state $S_<$ (when the true fitness $f(x_t)$ of the parent is smaller than the stored noisy fitness $\tilde f(x_t)$). We define states $S_=$ and $S_>$ similar to Section~\ref{sec:all-one-bit}. Namely, the algorithm is in state $S_=$ in iteration $t$, if $f(x_t) = \tilde f(x_t)$ and it is in state $S_>$, if $f(x_t) > \tilde f(x_t)$. For all $j \in [1..n]$ we also say that the algorithm is in state $S_j$ in iteration $t$, if $f(x_t) < \tilde f(x_t)$ and there are exactly $j$ zero-bits in the active prefix of $x_t$. We divide the run of the algorithm into phases in the similar way as we did in Section~\ref{sec:all-one-bit}. Namely, for all $i \in \N$ we define $\tau_i$ as the $i$-th iteration in which the algorithm is in state $S_=$ and we define phase $i$ as an integer interval $[\tau_i..\tau_{i + 1} - 1]$. Phase $0$ is defined as $[0..\tau_1 - 1]$. The length of a phase is its cardinality. The following lemma estimates an expected length of a phase, similar to Lemma~\ref{lem:phase-one-bit}.

\begin{lemma}
	\label{lem:phase-bitwise}
	Consider a run of the \oea with standard bit mutation with rate $\frac{\chi}{n}$ on \leadingones under bitwise noise with rate $\frac{q}{n}$. Let $\chi = \Theta(1)$ and $q = O(1)$. Let $r = \chi + q - \frac{2q\chi}{n}$ (that is, $\frac{r}{n}$ is the probability that a particular bit flipped by either mutation or noise, but not by both) and assume that there exists constant $c \in (0, 1)$ such that $- \frac{\ln(1 - \frac{q}{r})}{\ln\frac{r}{q}} \le (1 - c)e^{-(\chi + q)}$. Then for all $i \in \N$ we have 
	\begin{align*}
		E[\tau_{i + 1} - \tau_i] \le \frac{1}{c} \left(1 + \frac{qe^{\chi + q}}{\chi}\right) + O\left(\frac{1}{n}\right) = O(1).
	\end{align*}
    The expected length of phase 0 is
    \begin{align*}
        E[\tau_1] \le \frac{1}{c}\left((1 - c) + \frac{e^{\chi + q}}{2} + \frac{qe^{2(\chi + q)}}{\chi}\right) + O\left(\frac{1}{n}\right) = O(1).
    \end{align*}
\end{lemma}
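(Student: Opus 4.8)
The plan is to treat a single phase as an excursion of a Markov chain on the states $S_=$, $S_>$, and $\{S_j\}_{j\in[1..n]}$, and to bound its length by a first-step analysis. Writing $T_B$ for the number of iterations needed to reach $S_=$ starting from a bad state $B\in\{S_>\}\cup\{S_j\}$, the phase length equals the return time $T_=$ to $S_=$, and conditioning on the first step gives $E[\tau_{i+1}-\tau_i]=E[T_=]=1+\sum_{B}\Pr[S_=\to B]\,E[T_B]$, the sum running over the bad states. So the two ingredients are the one-step probabilities of entering a bad state from $S_=$ and the expected return times $E[T_B]$.

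For the one-step probabilities I would establish three facts. First, entering $S_j$ from $S_=$ forces the freshly evaluated offspring to have exactly $j$ zeros in its active prefix, so by Lemma~\ref{lem:active-prefix} this has probability at most $(q/n)^j$; applied to the \emph{new} parent, the same lemma yields the crucial state-independent bound that from any state the accepted offspring lands in $S_j$ with probability at most $(q/n)^j$, which is what controls ``regressions''. Entering $S_>$ from $S_=$ requires mutation to raise the true fitness while noise simultaneously spoils the reading, costing an extra factor $\tfrac1n$, so this contributes only $O(1/n)$. Second, the fixing event: in $S_j$, Lemma~\ref{lem:mut-and-noise} with $S$ equal to the set of the $j$ active-prefix zeros shows that with probability at least $\frac{r^j-q^j}{n^j}\big(e^{-(\chi+q)}-O(\tfrac1n)\big)$ the accepted offspring has at least one of these zeros genuinely repaired by mutation, hence moves to a state with at most $j-1$ active-prefix zeros; from $S_>$, a plain copy with no noise (probability $\ge e^{-(\chi+q)}(1-o(1))$) already returns to $S_=$.

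To bound the return times I would apply the Additive Drift Theorem~\ref{thm:additive-drift} with the potential $\phi(S_=)=0$, $\phi(S_>)=\tfrac1c e^{\chi+q}$, and $\phi(S_j)=\tfrac1c\sum_{k=1}^{j}t_k$, where $t_k=\frac{n^k e^{\chi+q}}{r^k-q^k}$ is (up to a $1-o(1)$ factor) the reciprocal of the decrease probability at level $k$. From $S_j$ the decrease event lowers the potential by at least $\phi(S_j)-\phi(S_{j-1})=\tfrac1c t_j$ and thus contributes at least $\tfrac1c$ to the drift, whereas regressions to higher levels $j'$ raise it by a total expected amount at most $\sum_{j'\ge2}(q/n)^{j'}\phi(S_{j'})\le\frac{e^{\chi+q}}{c}\sum_{j'\ge2}\frac{1}{(r/q)^{j'}-1}$, where the geometric domination $\phi(S_{j'})\le\tfrac1c t_{j'}(1+o(1))$ is used. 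This is exactly where assumption (ii) of Theorem~\ref{thm:all-bitwise} enters: via Lemma~\ref{lem:sum-integral} it bounds the last sum by $(1-c)e^{-(\chi+q)}$, so the regression cost is at most $\tfrac{1-c}{c}$ and the drift is at least $\tfrac1c-\tfrac{1-c}{c}=1-o(1)$. Hence $E[T_j]\le\phi(S_j)(1+o(1))$, and in particular $E[T_1]\le\frac{n e^{\chi+q}}{c\chi}(1+o(1))$.

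Combining, the $S_1$ term $\Pr[S_=\to S_1]\,E[T_1]\le\frac{q}{n}\cdot\frac{n e^{\chi+q}}{c\chi}$ gives the announced main term $\tfrac1c\cdot\frac{q e^{\chi+q}}{\chi}$; the levels $j\ge2$ contribute $\sum_{j\ge2}(q/n)^jE[T_j]\le\tfrac{1-c}{c}$ (again assumption (ii)), and together with the leading $1$ this rearranges to $\tfrac1c\big(1+\frac{q e^{\chi+q}}{\chi}\big)+O(\tfrac1n)$. The bound for phase $0$ follows from the same accounting, weighting the return times by the initial-state probabilities instead of the one-step probabilities out of $S_=$: the crude estimate $\Pr[s_0=S_>]\le\tfrac12$ (being in $S_>$ forces the first bit to be one) paired with $E[T_>]\le\tfrac1c e^{\chi+q}$ already yields the $\tfrac1c\cdot\frac{e^{\chi+q}}{2}$ summand, while $\Pr[s_0=S_j]\le(q/n)^j$ supplies the remaining summands. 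The step I expect to be the main obstacle is precisely the coupling of the return times across all levels, since a bad state may jump to an \emph{arbitrarily worse} bad state; the whole role of hypothesis (ii) is to certify that the total regression rate stays below $(1-c)$ times the repair rate, which keeps the drift at least $1$ and inflates the naive return time only by the bounded factor $\tfrac1c$.
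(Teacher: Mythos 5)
Your overall strategy---a potential on the states $S_=, S_>, S_1,\dots,S_n$, additive drift to bound return times, and a first-step decomposition out of $S_=$---is exactly the paper's, and your treatment of the states $S_j$ (repair probability from Lemma~\ref{lem:mut-and-noise}, regression cost controlled via Lemma~\ref{lem:sum-integral} and hypothesis (ii), telescoping potential whose increments are the reciprocal repair probabilities) is correct. The genuine gap is at the state $S_>$. You set $\phi(S_>)=\tfrac1c e^{\chi+q}$ and claim a drift of at least $1-o(1)$ there, but from $S_>$ the chain can regress to $S_1$: by Lemma~\ref{lem:active-prefix} this has probability up to $\tfrac{q}{n}$, and since $\phi(S_1)=\tfrac1c\cdot\tfrac{ne^{\chi+q}}{r-q}=\Theta(n)$, this single transition already costs about $\tfrac{qe^{\chi+q}}{c\chi}$ in expected potential increase --- a constant that your regression sum $\sum_{j'\ge 2}$ silently omits. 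The successful plain copy from $S_>$ only gains $e^{-(\chi+q)}\phi(S_>)=\tfrac1c$, so the drift at $S_>$ is roughly $\tfrac1c\bigl(c-\tfrac{qe^{\chi+q}}{\chi}\bigr)$, which is \emph{negative} for admissible parameters (e.g.\ $\chi\approx 1.4$, $q\approx 0.39$ gives $\tfrac{qe^{\chi+q}}{\chi}\approx 1.67>1>c$). Consequently the additive drift theorem cannot be applied with your potential, and your claimed bound $E[T_>]\le\tfrac1c e^{\chi+q}$ is false in general: from $S_>$ the chain falls into $S_1$ before returning with probability $\Theta(q/n)$ and then needs $\Theta(n)$ steps to recover, which adds a genuine constant of order $\tfrac{qe^{2(\chi+q)}}{c\chi}$ to $E[T_>]$.

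The repair is the paper's choice $\Phi(S_>)=1+\tfrac{qe^{\chi+q}}{r-q}$ (in your normalization, $\phi(S_>)=\tfrac{e^{\chi+q}}{c}\bigl(1+\tfrac{qe^{\chi+q}}{r-q}\bigr)$): the extra summand is calibrated so that the gain $e^{-(\chi+q)}\phi(S_>)$ absorbs exactly the $\tfrac{q}{n}\phi(S_1)$ regression cost, restoring a uniform positive drift. This change does not disturb your analysis of the $S_j$ states (the drop from $S_1$ to $S_>$ is still $\phi(S_1)(1-O(1/n))$), and since $\Pr[S_=\to S_>]=O(1/n)$ and the corrected $E[T_>]$ is still $O(1)$, the main-phase bound you derive survives. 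But the corrected $E[T_>]$ is precisely where the term $\tfrac{qe^{2(\chi+q)}}{\chi}$ in the stated phase-$0$ bound comes from (via $\Pr[s_0=S_>]\le\tfrac12$), and your accounting, which pairs $\tfrac12$ with the too-small $E[T_>]\le\tfrac1c e^{\chi+q}$, does not reproduce it.
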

\begin{proof}
	Since we are aiming at an asymptotic statement, we may assume that $n$ is sufficiently large. To prove this lemma, we use the additive drift theorem. For this reason we assign the following potential $\Phi$ to states $S_=$, $S_>$ and all $S_j$.
	\begin{align*}
		\Phi(s) = \begin{cases}
			\frac{n^j}{r^j - q^j}, &\text{ if } s = S_j \text{ for all } j \in [1..n], \\
			1 + \frac{qe^{\chi + q}}{r - q}, &\text{ if } s = S_>, \\
			0, &\text{ if } s = S_=,
		\end{cases}
	\end{align*}

	Consider the process $X_t = \Phi(s_t)$, where $s_t$ is the state of the algorithm in iteration $t$, and an arbitrary phase $i$ with $i > 0$ which starts at iteration $\tau_i$. Then $\tau_{i + 1}$ is the first iteration after $\tau_i$ where $X_{\tau_{i + 1}} = 0$. Note that $X_{\tau_i} = 0$, but in iteration $\tau_i + 1$ we can have larger potentials $X_{\tau_i + 1}$.
	If we find some $\delta > 0$ such that for all $t > \tau_i$ and for all possible values $\phi$ of the potential we have $E[X_{t + 1} - X_{t} \mid X_t = \phi] \ge \delta$, then by the additive drift theorem (Theorem~\ref{thm:additive-drift}) we obtain $E[\tau_{i + 1} - (\tau_i + 1)] \le \frac{E[X_{\tau_i + 1}]}{\delta}$. Hence, the expected length of the phase is at most $E[\tau_{i + 1} - \tau_i] \le 1 + \frac{E[X_{\tau_i + 1}]}{\delta}$.

	To estimate $E[X_{t + 1} - X_{t} \mid X_t = \phi]$, we compute the transition probabilities between different states. By Lemma~\ref{lem:active-prefix}, for all $j \in [1..\tilde f(x)]$ the probability to go from any state to state $S_j$ is at most $(\frac{q}{n})^j$, and for $j > \tilde f(x)$ this probability is zero.

	When the algorithm is in state $S_j$ for some $j \in [1..n]$, then to go to a state with a smaller potential (that is, to either $S_=$, $S_>$ or $S_k$ with $k < j$) in one iteration it is sufficient that in $y_t$ the mutation flips at least one zero-bit in the active prefix of $x_t$ (and does not flip any other bit in the active prefix except for other zero-bits) and then, when we evaluate $\tilde f(y_t)$, the noise flips all remaining zero-bits in the active prefix (but does not flip any other bit). The probability of this event can be estimated by applying Lemma~\ref{lem:mut-and-noise} with $S$ being the set of zero-bits in the active prefix of $x_t$, that is, this probability is at least 
	\begin{align*}
		\left(\frac{r^j - q^j}{n^j}\right) \left(e^{-(\chi + q)} - O\left(\frac{1}{n}\right)\right).
	\end{align*}
	If we go to a state with a smaller potential, then we reduce the potential by at least $\frac{n^j}{r^j - q^j} - \frac{n^{j - 1}}{r^{j -1} - q^{j - 1}}$, when $j > 1$, and by $\frac{n}{r - q} - 1 - \frac{qe^{\chi + q}}{r - q}$, when $j = 1$. Therefore, when the algorithm is in state $S_j$ with $j \ge 2$, then the drift of the potential is at least
	\begin{align}
		\label{eq:drift-i-ge-2}
		\begin{split}
			E[X_t &- X_{t + 1} \mid s_t = S_j, j \ge 2] \\
            \ge& \left(\frac{r^j - q^j}{n^j}\right) \left(e^{-(\chi + q)} - O\left(\frac{1}{n}\right)\right) \left(\frac{n^j}{r^j - q^j} - \frac{n^{j - 1}}{r^{j -1} - q^{j - 1}}\right) \\ 
			&- \sum_{k = j+1}^n \left(\frac{q}{n}\right)^k \frac{n^k}{r^k - q^k}.
		\end{split}
	\end{align}
	We now estimate the positive and the negative terms separately. For the positive part of the drift, by Lemma~\ref{lem:ith-power-diff}, we have 
	\begin{align}
		\label{eq:negative-drift-i-ge-2}
		\begin{split}
			\left(\frac{r^j - q^j}{n^j}\right) &\left(e^{-(\chi + q)} - O\left(\frac{1}{n}\right)\right) \left(\frac{n^j}{r^j - q^j} - \frac{n^{j - 1}}{r^{j - 1} - q^{j - 1}}\right) \\ 
			&= \left(e^{-(\chi + q)} - O\left(\frac{1}{n}\right)\right) \left(1 - \frac{r^j - q^j}{n(r^{j - 1} - q^{j - 1})}\right) \\
			&\ge \left(e^{-(\chi + q)} - O\left(\frac{1}{n}\right)\right)\left(1 - \frac{r + q}{n}\right) = e^{-(\chi + q)} - O\left(\frac{1}{n}\right),
		\end{split}
	\end{align}
	since we assume that $\chi = \Theta(1)$ and $q = O(1)$. For the negative term, we have
	\begin{align*}
		\sum_{k = j + 1}^n \left(\frac{q}{n}\right)^k \frac{n^k}{r^k - q^k} = \sum_{k = j + 1}^n \frac{q^k}{r^k - q^k} = \sum_{k = j + 1}^n \frac{1}{\left(\frac{r}{q}\right)^k - 1}.
	\end{align*}
	By Lemma~\ref{lem:sum-integral} with $a = \frac{r}{q} > 1$ and by lemma conditions, we have 
	\begin{align*}
		\sum_{k = j + 1}^n \frac{1}{\left(\frac{r}{q}\right)^k - 1} &\le - \frac{\ln\left(1 - \frac{q}{r}\right)}{\ln\frac{r}{q}} \le (1 - c)e^{-(\chi + q)}.
	\end{align*}
	Putting this and also eq.~\eqref{eq:negative-drift-i-ge-2} into eq.~\eqref{eq:drift-i-ge-2}, we obtain
	\begin{align*}
		E\left[X_t - X_{t + 1} \mid s_t = S_j, j \ge 2\right] \ge e^{-(\chi + q)} - O\left(\frac{1}{n}\right) - (1 - c)e^{-(\chi + q)} = ce^{-(\chi + q)} - O\left(\frac{1}{n}\right).
	\end{align*}

	For $S_1$ we can write a similar inequality as eq.~\eqref{eq:drift-i-ge-2}, but the positive term is
	\begin{align*}
		\left(\frac{r - q}{n}\right) &\left(e^{-(\chi + q)} - O\left(\frac{1}{n}\right)\right) \left(\frac{n}{r - q} - 1 - \frac{qe^{\chi + q}}{r - q}\right) \\ 
		&= \left(e^{-(\chi + q)} - O\left(\frac{1}{n}\right)\right) \left(1 - \frac{r - q + qe^{\chi + q}}{n}\right) \\
		&= e^{-(\chi + q)} - O\left(\frac{1}{n}\right),
	\end{align*}
	which is asymptotically the same bound as in eq.~\eqref{eq:drift-i-ge-2}. Hence, the bound
	\begin{align*}
		E\left[X_t - X_{t + 1} \mid s_t = S_j\right] \ge ce^{-(\chi + q)} - O\left(\frac{1}{n}\right)
	\end{align*}
	holds for all $j$.

	When the algorithm is in state $S_>$, then to get to state $S_=$ it is enough to flip no bits by mutation or noise. The resulting offspring then is a copy of its parent and its fitness is equal to its noisy fitness, which in $S_>$ is larger than the fitness stored by the algorithm (thus, the new individual replaces the parent). The probability of this event is
	\begin{align*}
		\left(1 - \frac{\chi}{n}\right)^n \left(1 - \frac{q}{n}\right)^n = e^{-(\chi + q)} - O\left(\frac{1}{n}\right).
	\end{align*}
	The transition probabilities from $S_>$ to the states $S_i$ can be computed as before. Hence, the drift in state $S_>$ satisfies
	\begin{align*}
		E[X_t - X_{t + 1} \mid s_t = S_>] &\ge \left(1 + \frac{qe^{\chi + q}}{r - q}\right)\left(e^{-(\chi + q)} - O\left(\frac{1}{n}\right)\right) - \sum_{i = 1}^n \left(\frac{q}{n}\right)^i \frac{n^i}{r^i - q^i} \\
		&\ge e^{-(\chi + q)} + \frac{q}{r - q} - O\left(\frac{1}{n}\right) - \frac{q}{r - q} - \sum_{i = 2}^n \frac{q^i}{r^i - q^i} \\
		&\ge e^{-(\chi + q)} -  (1 - c)e^{-(\chi + q)} - O\left(\frac{1}{n}\right) = ce^{-(\chi + q)} - O\left(\frac{1}{n}\right),
	\end{align*}
	where we used Lemma~\ref{lem:sum-integral} and lemma conditions to obtain the last line.
	 
	We have now shown that for every state $s \ne S_=$ the expected progress of $X_t$ is at least $\delta = ce^{-(\chi + q)} - O(\frac{1}{n})$. To apply the additive drift theorem, we now estimate $E[X_{\tau_i + 1}]$. For this we note that the probability to go from $S_=$ to any of $S_j$ in one iteration is at most $(\frac{q}{n})^j$ by Lemma~\ref{lem:active-prefix}. To go to state $S_>$ from $S_=$, the algorithm has to create an offspring with true fitness better than the fitness of the current parent. Hence, the probability of this event is at most $\frac{\chi}{n}$ (that is, the probability of flipping the first zero-bit of the parent with mutation). With these estimates of transition probabilities, we obtain
	\begin{align*}
		E[X_{\tau_i + 1}] &\le \left(1 + \frac{qe^{\chi + q}}{r - q}\right) \frac{\chi}{n} + \sum_{j = 1}^n \left(\frac{q}{n}\right)^j \frac{n^j}{r^j - q^j} \\
		&\le O\left(\frac{1}{n}\right) + \frac{q}{r - q} + (1 - c)e^{-(\chi + q)} = O(1).
	\end{align*}

	By the additive drift theorem (Theorem~\ref{thm:additive-drift}), we have
	\begin{align*}
		E[\tau_{i + 1} - \tau_i] &\le 1 + \frac{E[X_{\tau_i + 1}]}{\delta} \le 1 + \frac{\frac{q}{r - q} + (1 - c)e^{-(\chi + q)} + O\left(\frac{1}{n}\right)}{ce^{-(\chi + q)} - O\left(\frac{1}{n}\right)} \\
        &= \frac{\frac{q}{r - q} + e^{-(\chi + q)} + O\left(\frac{1}{n}\right)}{ce^{-(\chi + q)} - O\left(\frac{1}{n}\right)} = \frac{qe^{\chi + q}}{c(r - q)} + \frac{1}{c} + O\left(\frac{1}{n}\right) \\
        &= \frac{1}{c} \left(1 + \frac{qe^{\chi + q}}{\chi}\right) + O\left(\frac{1}{n}\right) =  O(1).
	\end{align*}

    For phase 0 we can use the additive drift theorem with the same potential. The estimates of the drift stay the same, but the expected initial potential $E[X_0]$ is different, and we compute an upper bound on it as follows. For all $j \in [0..n]$, to start in state $S_j$ the noise must flip the first $j$ zero-bits in the initial individual when the algorithm evaluates its fitness. By Lemma~\ref{lem:active-prefix}, the probability of this event is at most $(\frac{q}{n})^j$. To start in state $S_>$, the initial individual must have the true fitness at least one, that is, its first bit must be a one-bit. The probability of this event is at most $\frac{1}{2}$. Therefore, we have
    \begin{align*}
        E[X_0] &= \sum_{j = 1}^n \Pr[s_0 = S_j] \frac{n^j}{r^j - q^j} + \Pr[s_0 = S_>] \left(1 + \frac{qe^{\chi + q}}{r - q}\right) \\
        &\le (1 - c)e^{-(\chi + q)} + \frac{1}{2} \left(1 + \frac{qe^{\chi + q}}{r - q}\right).
    \end{align*}
    By the additive drift theorem (Theorem~\ref{thm:additive-drift}), the expected length of phase 0 is then
    \begin{align*}
        E[\tau_1] &\le \frac{E[X_0]}{\delta} \le \frac{(1 - c)e^{-(\chi + q)} + \frac{1}{2} \left(1 + \frac{qe^{\chi + q}}{r - q}\right)}{ce^{-(\chi + q)} - O\left(\frac{1}{n}\right)} \\
        &= \frac{1}{c}\left((1 - c) + \frac{e^{\chi + q}}{2} + \frac{qe^{2(\chi + q)}}{\chi}\right) + O\left(\frac{1}{n}\right).
    \end{align*}

\end{proof}

We are now in position to prove the main result of this section, Theorem~\ref{thm:all-bitwise}.

\begin{proof}[Proof of Theorem~\ref{thm:all-bitwise}]
    Since this proof repeats the proof of Theorem~\ref{thm:all-one-bit}, we omit most of the details, except the insignificant differences. We define \emph{super-phases} and \emph{successful phases} in the same way as in the proof of Theorem~\ref{thm:all-one-bit}. The probability that a phase is successful is at least the probability that in the first iteration of this phase mutation flips the first zero-bit of an individual and does not flip any other bit, and noise does not flip any bit. By Lemma~\ref{lem:e-estimate}, the probability of this event is at least
    \begin{align*}
        \frac{\chi}{n}\left(1 - \frac{\chi}{n}\right)^{n - 1}\left(1 - \frac{q}{n}\right)^{n} \ge \frac{\chi}{n}\left(e^{-\chi} - \frac{\chi^2}{2n}\right)\left(e^{-q} - \frac{q^2}{2n}\right) = \frac{\chi e^{-(\chi + q)}}{n} - O\left(\frac{1}{n^2}\right).
    \end{align*}
    Therefore, the number of phases $N$ in one super-phase is dominated by geometric distribution $\Geom(\frac{\chi e^{-(\chi + q)}}{n} - O(\frac{1}{n^2}))$. Hence, if we denote by $T_j$ the length of $j$-th phase in a super-phase, then by Lemmas~\ref{lem:wald} and~\ref{lem:phase-bitwise} the expected length of one super-phase is at most
    \begin{align*}
        E[t_{i + 1} - t_i] &= \sum_{j = 1}^N E[T_j] \le E[N] \left(\frac{1}{c} \left(1 + \frac{qe^{\chi + q}}{\chi}\right) + O\left(\frac{1}{n}\right)\right) \\
        &\le \frac{\frac{1}{c} \left(1 + \frac{qe^{\chi + q}}{\chi}\right) + O\left(\frac{1}{n}\right)}{\frac{\chi e^{-(\chi + q)}}{n} - O\left(\frac{1}{n^2}\right)} \\
        &= \frac{\frac{ne^{\chi + q}}{\chi} \cdot \frac{1}{c} \left(1 + \frac{qe^{\chi + q}}{\chi}\right) + O(1)}{1 - O\left(\frac{1}{n}\right)} \\
        &= \frac{n}{c} \left(\frac{e^{\chi + q}}{\chi} + q\left(\frac{e^{\chi + q}}{\chi}\right)^2 \right) + O(1).
    \end{align*}
    The optimum is reached after at most $n$ super-phases. If we also take into account phase $0$, then the total runtime is then at most
    \begin{align*}
        E[T] &= E[\tau_0] + \sum_{k = 1}^{|R| - 1} E[t_{i + 1} - t_i] \\
        &\le O(1) + n \cdot \left(\frac{n}{c} \left(\frac{e^{\chi + q}}{\chi} + q\left(\frac{e^{\chi + q}}{\chi}\right)^2 \right) + O(1)\right) \\
        &= \frac{n^2}{c} \left(\frac{e^{\chi + q}}{\chi} + q\left(\frac{e^{\chi + q}}{\chi}\right)^2 \right) + O(n).
    \end{align*}
\end{proof}

\subsection{Discussion of the Theoretical Results}

When the noise rate is zero, Theorem~\ref{thm:all-one-bit} states that the expected runtime of the algorithm is at most $n^2$ iterations, and Theorem~\ref{thm:all-bitwise} gives a bound of $\frac{e^\chi n^2}{\chi} + O(n)$ iterations (in this case we can chose $c = 1$ to satisfy the assumptions of the theorem). These are larger than the bounds for the \oea with one-bit mutation and with standard bit mutation from~\cite{Rudolph97} and~\cite{BottcherDN10}, which indicates that our bounds are not tight by at least a constant factor of $\frac{1}{2}$. The main argument in those previous studies which is missing in our analysis is that all bits to the right of the left-most zero-bit in the current parent $x$ are distributed uniformly at random at any time. Therefore, each time the algorithm improves the fitness, the improvement is greater than $1$ in expectation (and most of the time it is close to $2$). It is not trivial to extend this argument to the noisy case, since situations when the true and noisy fitness of the current individual are not the same might lead to a non-uniform distributions of the bits in the suffix. Nevertheless, we are optimistic that modifying this argument will allow to improve our bounds in future research, and it will also help to prove lower bounds.

It is also interesting to note that when $q = o(1)$, the bounds from Theorems~\ref{thm:all-one-bit} and~\ref{thm:all-bitwise} stay the same as with $q = 0$, apart from a factor of $(1 + o(1))$. This suggests that the performance of the \oea without re-evaluations is not affected by even quite strong noise rates, as long as noise occurs only once in a super-constant number of evaluations on average. Without a matching lower bound on the noisy runtime we cannot formally state that this suggestion is true, but it is supported by the experiments in Section~\ref{sec:experiments} (see Figure~\ref{fig:runtimes}).

Finally, we note that the proofs of Theorems~\ref{thm:all-one-bit} and~\ref{thm:all-bitwise} are mostly different in how they treat the noise model, but the choice of the mutation operator is not so important. In all estimates of probabilities of progress (reducing the number of zero-bits in the active prefix of $x$ or having a successful phase) we only want the mutation to flip one particular bit and not to flip any other bits. This implies that the same arguments can be repeated for different combinations of mutation and noise, e.g., for one-bit noise and standard bit mutation or for bitwise noise and one-bit mutation.

\section{Experiments}
\label{sec:experiments}

In this section we present the results of our empirical study. We are interested in comparing the performance of the \oea with and without re-evaluations of the parent individual on noisy \leadingones. In particular, we want to see, at which problem sizes and at which noise rates the performance of these approaches starts to be different and which noise rates are tolerated by each approach.

We ran the \oea with and without re-evaluations on \leadingones with problem sizes $n \in \{2^3, 2^4, \dots, 2^9\}$ and with bitwise noise with rate $\frac{q}{n}$, where $q \in \{\frac{1}{n^2}, \frac{\ln(n)}{n^2}, \frac{1}{n}, 1\}$. As the mutation operator we used standard bit mutation with the most commonly used mutation rate $\frac{1}{n}$ (that is, $\chi = 1$). For each parameter setting (problem size $n$, mutation rate $q$ and with or without re-evaluation) we performed $128$ runs. Each run used a Python random generator which was initialized with a seed from $[0..127]$ (one run per each seed). Each run stopped after the algorithm found the optimum and evaluated its fitness correctly or after $100n^2$ iterations. The latter time limit was introduced, since in~\cite{Sudholt21} it was shown that the runtime of the \oea with re-evaluations is super-polynomial for $q=\omega(\frac{\log(n)}{n^2})$, hence running this algorithm with $q \in \{\frac{1}{n}, 1\}$ might be costly even for relatively small problem sizes.

For each run we logged if the run found the optimum and evaluated it correctly before the time limit, and if it did, then we logged the number of iterations it took. We also logged the best found true and noisy fitness values, which allows us to evaluate the performance of runs which did not find the optimum before the time limit.

Figure~\ref{fig:mean-fitness} shows the average best true fitness found by the algorithm for different parameter setting and different problem sizes. We normalize fitness by its maximum value (that is, by $n$) so that it was easier to compare the algorithms.

\begin{figure}
    \begin{center}
        \begin{tikzpicture}
            \begin{axis}[width=0.7\linewidth, height=0.3\textheight,
                cycle list name=tikzcycle, grid=major,  xmode=log, log base x=2, legend pos=outer north east, ymin = 0, ymax = 1.1,
                legend cell align={left},
                xlabel={Problem size $n$}, ylabel={Best fitness found $/n$}]

                \addplot plot [error bars/.cd, y dir=both, y explicit] coordinates
                    {(8,1.0)+-(0,0.0)(16,1.0)+-(0,0.0)(32,1.0)+-(0,0.0)(64,1.0)+-(0,0.0)(128,1.0)+-(0,0.0)(256,1.0)+-(0,0.0)(512,1.0)+-(0,0.0)};
                \addlegendentry{$q = \frac{1}{n^2}$};
                \addplot plot [error bars/.cd, y dir=both, y explicit] coordinates
                    {(8,1.0)+-(0,0.0)(16,1.0)+-(0,0.0)(32,1.0)+-(0,0.0)(64,1.0)+-(0,0.0)(128,1.0)+-(0,0.0)(256,1.0)+-(0,0.0)(512,1.0)+-(0,0.0)};
                \addlegendentry{$q = \frac{\ln(n)}{n^2}$};
                \addplot plot [error bars/.cd, y dir=both, y explicit] coordinates
                    {(8,1.0)+-(0,0.0)(16,1.0)+-(0,0.0)(32,1.0)+-(0,0.0)(64,1.0)+-(0,0.0)(128,1.0)+-(0,0.0)(256,1.0)+-(0,0.0)(512,1.0)+-(0,0.0)};
                \addlegendentry{$q = \frac{1}{n}$};
                \addplot plot [error bars/.cd, y dir=both, y explicit] coordinates
                    {(8,1.0)+-(0,0.0)(16,1.0)+-(0,0.0)(32,1.0)+-(0,0.0)(64,1.0)+-(0,0.0)(128,1.0)+-(0,0.0)(256,0.995574951171875)+-(0,0.05006371253518037)(512,1.0)+-(0,0.0)};
                \addlegendentry{$q = 1$};
                \addplot plot [error bars/.cd, y dir=both, y explicit] coordinates
                    {(8,1.0)+-(0,0.0)(16,1.0)+-(0,0.0)(32,1.0)+-(0,0.0)(64,1.0)+-(0,0.0)(128,1.0)+-(0,0.0)(256,1.0)+-(0,0.0)(512,1.0)+-(0,0.0)};
                \addplot plot [error bars/.cd, y dir=both, y explicit] coordinates
                    {(8,1.0)+-(0,0.0)(16,1.0)+-(0,0.0)(32,1.0)+-(0,0.0)(64,1.0)+-(0,0.0)(128,1.0)+-(0,0.0)(256,1.0)+-(0,0.0)(512,1.0)+-(0,0.0)};
                \addplot plot [error bars/.cd, y dir=both, y explicit] coordinates
                    {(8,1.0)+-(0,0.0)(16,1.0)+-(0,0.0)(32,1.0)+-(0,0.0)(64,1.0)+-(0,0.0)(128,0.91290283203125)+-(0,0.048359548851241156)(256,0.7318115234375)+-(0,0.038318572806443026)(512,0.5832366943359375)+-(0,0.02643459333672273)};
                \addplot plot [error bars/.cd, y dir=both, y explicit] coordinates
                    {(8,1.0)+-(0,0.0)(16,0.986328125)+-(0,0.030313222959963734)(32,0.68115234375)+-(0,0.06796475489435248)(64,0.4190673828125)+-(0,0.036810508138777134)(128,0.25732421875)+-(0,0.018851992531728274)(256,0.158599853515625)+-(0,0.009279417651146902)(512,0.0999755859375)+-(0,0.0056625509200086185)};
            \end{axis}
        \end{tikzpicture}
    \end{center}
    \caption{The normalized mean best true fitnesses and their standard deviation of the \oea with and without re-evaluations for different noise rates depending on the problem size. The results of the \oea without re-evaluations are shown with solid lines and filled markers and the runtimes of the \oea with re-evaluations are shown in dashed lines and unfilled markers.}
    \label{fig:mean-fitness}
\end{figure}
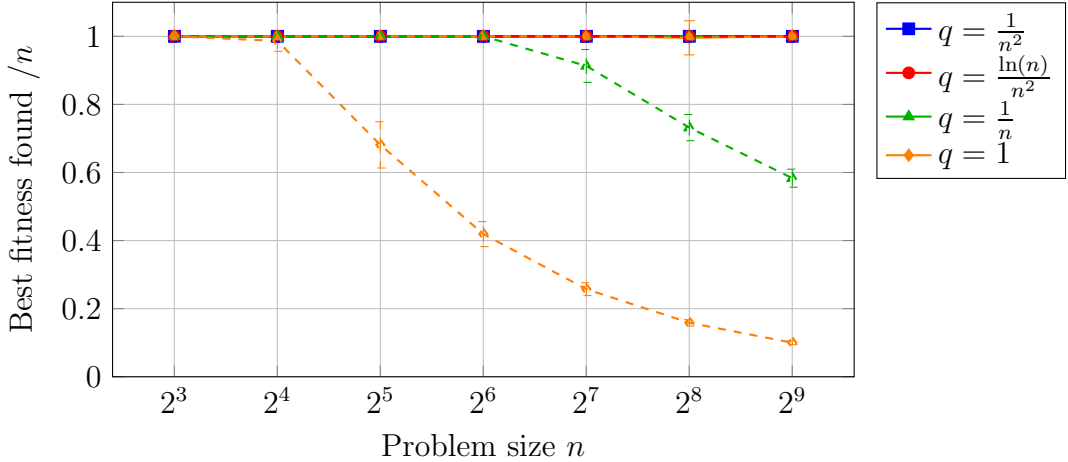

The results show that both algorithms are robust to small noise rates $q = \frac{1}{n^2}$ and $q = \frac{\ln(n)}{n^2}$ and find the optimal solution before the time limit. For noise rate $q = \frac{1}{n}$ the \oea with re-evaluations starts to struggle to find the optimum at problem size $2^7$. When the problem size grows to $2^9$, it never finds the optimum in time, but only reaches the fitness which is on average $0.59n$. 

For $q = 1$ the conditions of Theorem~\ref{thm:all-bitwise} are not satisfied for the considered $\chi = 1$, however the experiments show that the \oea without re-evaluations finds the optimum in almost all runs before the time limit. The only exception is the run with $n = 256$ and seed 8. The further study of this run showed that the algorithm at some point got a parent individual with three zeros in its active prefix. Lemma~\ref{lem:mut-and-noise} demonstrates that in this situation the probability to get to a better state with less zero-bits in the active prefix is of order $\frac{1}{n^3}$, hence it is natural that the algorithm in this situation does not find the optimum before the quadratic time limit. We ran the algorithm with the same seed without time limit, and it found the optimum in 9148621 iterations, which is approximately by factor $1.4$ larger than the time limit. The \oea with re-evaluations performed in this case very poorly. Even at problem size $2^4$ there were runs which did not find the optimum in time, and by problem size $2^9$ the average best fitness found by the algorithm in time limit was less than $0.10n$. 

For those parameters settings which allowed all 128 runs to finish before the time limit, we compare the average time it took the algorithm to find the optimum. The mean runtimes normalized by $n^2$ (which, up to a constant factor, is the asymptotic bound from Theorem~\ref{thm:all-bitwise}) are shown in Figure~\ref{fig:runtimes}. Note that for the run of the \oea without re-evaluations with $q = 1$, $n = 256$ and seed $8$ we used the runtime which was obtained when we performed this run without time limit.

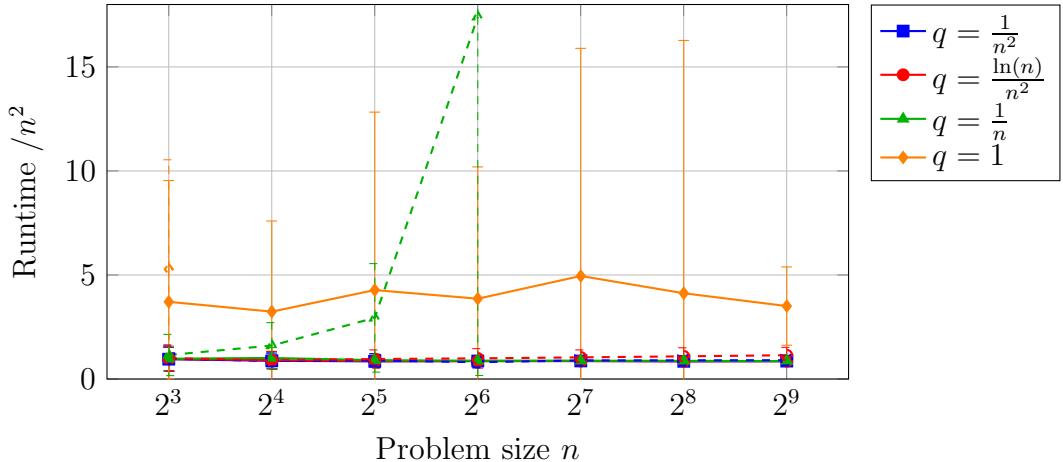
\begin{figure}
    \begin{center}
        \begin{tikzpicture}
            \begin{axis}[width=0.7\linewidth, height=0.3\textheight,
                cycle list name=tikzcycle, grid=major,  xmode=log, log base x=2, legend pos=outer north east, ymin = 0, ymax = 18,
                legend cell align={left},
                xlabel={Problem size $n$}, ylabel={Runtime $/n^2$}]

                \addplot plot [error bars/.cd, y dir=both, y explicit] coordinates
                    {(8,0.9530029296875)+-(0,0.5668456917487726)(16,0.875213623046875)+-(0,0.373704428182033)(32,0.8500900268554688)+-(0,0.267850233007382)(64,0.86724853515625)+-(0,0.20242606605082086)(128,0.8669672012329102)+-(0,0.15469472328556408)(256,0.8494489192962646)+-(0,0.09054699182851848)(512,0.8604410290718079)+-(0,0.07403481892098208)};
                \addlegendentry{$q = \frac{1}{n^2}$};
                \addplot plot [error bars/.cd, y dir=both, y explicit] coordinates
                    {(8,0.961181640625)+-(0,0.5643355889482661)(16,0.90411376953125)+-(0,0.42220690051443094)(32,0.8581161499023438)+-(0,0.2666612027924313)(64,0.8580741882324219)+-(0,0.20096153204737008)(128,0.8610916137695312)+-(0,0.13995495874313904)(256,0.8523719310760498)+-(0,0.09326239607095504)(512,0.8539990186691284)+-(0,0.06977669611536919)};
                \addlegendentry{$q = \frac{\ln(n)}{n^2}$};
                \addplot plot [error bars/.cd, y dir=both, y explicit] coordinates
                    {(8,0.9754638671875)+-(0,0.6045392355268789)(16,1.011138916015625)+-(0,0.5009974560964776)(32,0.9066238403320312)+-(0,0.29684897408307126)(64,0.87799072265625)+-(0,0.21285815832905647)(128,0.8793082237243652)+-(0,0.15012400282130942)(256,0.8735600709915161)+-(0,0.09543288974620347)(512,0.8613344728946686)+-(0,0.07571242776004798)};
                \addlegendentry{$q = \frac{1}{n}$};
                \addplot plot [error bars/.cd, y dir=both, y explicit] coordinates
                    {(8,3.7109375)+-(0,5.827214066943842)(16,3.238555908203125)+-(0,4.358411289623043)(32,4.275550842285156)+-(0,8.552553154992491)(64,3.860382080078125)+-(0,6.334360383784051)(128,4.9519453048706055)+-(0,10.941329564123137)(256,4.1229212284088135)+-(0,12.151007797341176)(512,3.5059000849723816)+-(0,1.8823850010860306)};
                \addlegendentry{$q = 1$};
                \addplot plot [error bars/.cd, y dir=both, y explicit] coordinates
                    {(8,0.978759765625)+-(0,0.6105575870293305)(16,0.86712646484375)+-(0,0.3822480800291252)(32,0.8916015625)+-(0,0.32650157275452824)(64,0.8085975646972656)+-(0,0.17284887981144556)(128,0.8935604095458984)+-(0,0.17407508114842382)(256,0.8975763320922852)+-(0,0.17531449202247154)(512,0.9043707251548767)+-(0,0.1604770417098358)};
                \addplot plot [error bars/.cd, y dir=both, y explicit] coordinates
                    {(8,1.0029296875)+-(0,0.6395656030600911)(16,0.905548095703125)+-(0,0.4216335481202155)(32,0.9594802856445312)+-(0,0.4413968591553022)(64,0.99267578125)+-(0,0.4650022304330588)(128,1.0398306846618652)+-(0,0.36075445247833504)(256,1.0899869203567505)+-(0,0.41104752037734843)(512,1.1427583396434784)+-(0,0.38043795393282187)};
                \addplot plot [error bars/.cd, y dir=both, y explicit] coordinates
                    {(8,1.1531982421875)+-(0,0.9907511256338397)(16,1.606903076171875)+-(0,1.1026229040610003)(32,2.9396743774414062)+-(0,2.6074291446043)(64,17.410837173461914)+-(0,17.23971716959657)};
                \addplot plot [error bars/.cd, y dir=both, y explicit] coordinates
                    {(8,5.2772216796875)+-(0,5.260056704588453)};
            \end{axis}
        \end{tikzpicture}
    \end{center}
    \caption{The normalized mean runtimes and their standard deviation of the \oea with and without re-evaluations for different noise rates depending on the problem size. The runtimes of the \oea without re-evaluations are shown with solid lines and filled markers and the runtimes of the \oea with re-evaluations are shown in dashed lines and unfilled markers.}
    \label{fig:runtimes}
\end{figure}

In this plot we see that when the noise rate is small, namely, $q = \frac{1}{n^2}$, the runtime of both algorithms is very close to the runtime of the \oea in the noiseless case, which, as it was shown in~\cite{BottcherDN10}, is approximately $0.86n^2$. For the larger noise rates $q = \frac{\ln(n)}{n^2}$ and $q = \frac{1}{n}$ the runtime of the \oea without re-evaluations stays the same, close to $0.86n^2$. For $q = \frac{\ln(n)}{n^2}$ the runtime of the \oea with re-evaluations grows a little faster than with $q = \frac{1}{n^2}$. Recalling the theoretical bound from~\cite{Sudholt21} (see Section~\ref{sec:oea}), we can conclude that the leading constant in the exponential term of that bound is relatively small, and the runtime grows just slightly faster than $\Theta(n^2)$ in this case. Noise rate $q = \frac{1}{n}$ yields a significantly faster growth of runtime of the \oea with re-evaluations, which can be observed already at small problem sizes.

Noise rate $q = 1$ (which, we recall, is too strong to satisfy conditions of Theorem~\ref{thm:all-bitwise}) is harder to deal with for both versions of the algorithm. The \oea with re-evaluations finished all 128 runs before the time limit only for problem size $n = 2^3$, hence there is only one point in the plot. The runtime of the \oea without re-evaluations in this case is significantly larger than with smaller mutation rates, and it is much less concentrated. It seems to grow not faster than $n^2$ (up to a constant factor), but the high variance of the runtimes does not allow to conclude this. 

We also note that Figure~\ref{fig:runtimes} shows the runtimes measured as the number of iterations for both algorithms. If we measure it as the number of fitness evaluations, then the runtime of the \oea with re-evaluations will be twice as large, and therefore, its performance is inferior to the \oea without re-evaluations for all noise rates.

\subsection{Statistical Tests}

In this section we show the results of statistical tests for the values shown in Figures~\ref{fig:mean-fitness} and~\ref{fig:runtimes} for the maximum problem size $n = 2^9$.

For Figure~\ref{fig:mean-fitness} it does not make sense to compare the best fitness of the settings, in which the global optimum was always found. Hence, we compare the best found fitness of the \oea with re-evaluations for $q = \frac{1}{n}$ and $q = 1$ and all other settings, for which the best found fitness is always 512. We perform Welch's t-test, which is a modification of Student's t-test for samples with different variance. The obtained p-values are shown in Table~\ref{tbl:best-fitness}. As the test shows, the considered samples are significantly different.

\begin{table*}
    \centering
    \begin{tabular}{l|lll}
        \toprule
        & $q = \frac{1}{n}$ & $q = 1$ & All others \\[5pt] \hline
        & & & \\[-8pt]
        $q = \frac{1}{n}$ & $1.0$ & $3.376 \cdot 10^{-173}$ & $2.593 \cdot 10^{-154}$ \\[5pt]
        $q = 1$ & $3.376 \cdot 10^{-173}$ & $1.0$ & $1.186 \cdot 10^{-281}$ \\[5pt]
        All others & $2.593 \cdot 10^{-154}$ & $1.186 \cdot 10^{-281}$ & --- \\[5pt]
        \bottomrule
    \end{tabular}
    \caption{The p-values of Welch's t-test performed on the samples of the best fitness for different parameter settings of the \oea and noise rates. The first two columns and the first two rows represent the runs of the \oea with re-evaluations.}
    \label{tbl:best-fitness}
\end{table*}

For Figure~\ref{fig:runtimes} we compare all parameters which are present in that figure, that is, the runtimes of the \oea without re-evaluations with all noise rates and of the \oea with re-evaluations with $q = \frac{1}{n^2}$ and $q = \frac{\ln(n)}{n^2}$. We also performed Welch's t-test, since there is no guarantee that the variances are the same. The resulting p-values (for a two-sided null hypothesis) are shown in Table~\ref{tbl:runtimes}. In that table we see that the runtimes of the \oea without re-evaluations do not significantly differ for all $q \le \frac{1}{n}$, while all other pairs of settings have a significant difference in ther mean runtimes.

\begin{landscape}
    \begin{table}
        \centering
        \begin{tabular}{l|llllll}
            \toprule
            & $q = \frac{1}{n^2}$ & $q = \frac{1}{n^2}$, Re & $q = \frac{\ln(n)}{n^2}$ & $q = \frac{\ln(n)}{n^2}$, Re & $q = \frac{1}{n}$ & $q = 1$\\[5pt] \hline
            & & & & & \\[-8pt]
            $q = \frac{1}{n^2}$ &$1.0$ & $5.470 \cdot 10^{-3}$ & $0.474$ & $1.245 \cdot 10^{-13}$ & $0.924$ & $5.205 \cdot 10^{-32}$ \\[5pt]
            $q = \frac{1}{n^2}$, Re & $5.470 \cdot 10^{-3}$ & $1.0$ & $1.356 \cdot 10^{-3}$ & $7.158 \cdot 10^{-10}$ & $6.681 \cdot 10^{-3}$ & $2.016 \cdot 10^{-31}$ \\[5pt]
            $q = \frac{\ln(n)}{n^2}$ &$0.474$ & $1.356 \cdot 10^{-3}$ & $1.0$ & $4.114 \cdot 10^{-14}$ & $0.421$ & $4.241 \cdot 10^{-32}$ \\[5pt]
            $q = \frac{\ln(n)}{n^2}$, Re & $1.245 \cdot 10^{-13}$ & $7.158 \cdot 10^{-10}$ & $4.114 \cdot 10^{-14}$ & $1.0$ & $1.472 \cdot 10^{-13}$ & $4.980 \cdot 10^{-28}$ \\[5pt]
            $q = \frac{1}{n}$ &$0.924$ & $6.681 \cdot 10^{-3}$ & $0.421$ & $1.472 \cdot 10^{-13}$ & $1.0$ & $5.352 \cdot 10^{-32}$ \\[5pt]
            $q = 1$ &$5.205 \cdot 10^{-32}$ & $2.016 \cdot 10^{-31}$ & $4.241 \cdot 10^{-32}$ & $4.980 \cdot 10^{-28}$ & $5.352 \cdot 10^{-32}$ & $1.0$ \\[5pt]
            \bottomrule
        \end{tabular}
        \caption{The p-values of Welch's t-test performed on the samples of the runtimes for different parameter settings of the \oea and noise rates. ``Re'' in the top row and in the left column indicates that this parameter setting is for the \oea with re-evaluations.}
        \label{tbl:runtimes}
    \end{table}
\end{landscape}

\section{Conclusion}

In this paper we have shown that simple random search heuristics can be very robust to noise, even if a constant fraction of all fitness evaluations are faulty due to the noise. For this, however, it is necessary that the algorithm does not re-evaluate solutions, as if it was unaware of optimizing a noisy function. We showed that this approach leads to a much better performance than the approach dominant in theoretical works, where parent individuals are re-evaluated in each iteration.

From our mathematical analysis we can also see the reason for this different behavior. Without re-evaluations, if we accept an individual based on an incorrectly evaluated fitness, then an individual which truly has this fitness is not far away, and the algorithm is capable to fix its mistake by generating that individual. Hence the negative impact of having an individual with incorrect fitness in the population, or the positive effect of re-evaluations, is not as big as previously thought. On the other hand, re-evaluations allow for a second type of mistake, namely wrongly evaluating the parent so that it appears worse than it is, and then easily accepting a worse offspring. Since generating worse offspring is relatively easy when close to the optimum, this type of mistake appears relatively often with high noise rate. 
We note that this explanation is not specific to \leadingones benchmark. For this reason, we are optimistic that our findings generalize to other optimization problems, and this is surely an interesting direction for future research.


\newcommand{\etalchar}[1]{$^{#1}$}

}

\end{document}